\documentclass{article}
\usepackage{iclr2027_conference,times}

\usepackage{amsmath,amsfonts,bm}

\def\eqref#1{equation~\ref{#1}}

\def\1{\bm{1}}

\DeclareMathAlphabet{\mathsfit}{\encodingdefault}{\sfdefault}{m}{sl}
\SetMathAlphabet{\mathsfit}{bold}{\encodingdefault}{\sfdefault}{bx}{n}

\newcommand{\KL}{D_{\mathrm{KL}}}

\usepackage{amsmath}
\usepackage{amssymb}
\usepackage{amsthm}
\usepackage{booktabs}
\usepackage{multirow}
\usepackage{graphicx}
\usepackage{wrapfig}
\providecommand{\WFclear}{}
\usepackage{float}
\usepackage{xcolor}
\usepackage{colortbl}
\definecolor{ourrow}{RGB}{226,236,244}
\definecolor{zerorow}{gray}{0.9}
\usepackage{algorithm}
\usepackage{algpseudocode}

\newcommand{\sil}{\varnothing}
\newcommand{\silencetok}{\texttt{<|silence|>}}

\newtheorem{proposition}{Proposition}[section]
\newtheorem{corollary}[proposition]{Corollary}
\iclrfinalcopy
\title{Live-OPSD: On-Policy Self-Distillation for Proactive Streaming Large Video Models}

\author{
    Kaixuan Du$^{1, *, \dagger}$ \quad 
    Xin Wan$^{1, *, \ddagger}$ \quad 
    Dai Guan$^{1}$ \quad 
    Ming Chen$^{1}$ \quad 
    Xiaoxi Jiang$^{1}$ \quad 
    Guanjun Jiang$^{1}$ \\[0.5em]
    $^1$Qwen Business Unit of Alibaba \\
    $^{*}$Equal contribution. \quad 
    $^{\dagger}$Work done during an internship at Alibaba. \quad
    $^{\ddagger}$Corresponding author.
}

\begin{document}
\maketitle

\begin{abstract}
Streaming multimodal assistants must make response decisions while events are
still unfolding. They need to recognize when the available evidence warrants a
response, produce an appropriate continuation, and remain silent otherwise.
Supervised fine-tuning learns from recorded assistant turns, but does not
directly train on the response prefixes produced by the evolving policy itself.
We introduce Live-OPSD, an on-policy self-distillation method that uses
privileged hints available only during training to supervise the policy on its own
outputs. The
policy first generates a continuation from the streaming context alone. The same
model then re-evaluates that continuation with a hint describing the relevant
evidence, and the resulting distributions guide the unhinted policy. On
Qwen3-Omni-30B-A3B, Live-OPSD attains an average score of 59.64 on LiveProBench,
5.26 points above the strongest continued-SFT baseline, and stays competitive on
OVO-Bench (65.84). These results indicate that guiding the policy on its own
outputs with privileged hints improves both when and how a streaming
assistant responds.
\end{abstract}

\section{Introduction}

For a streaming assistant, a correct answer can still be unhelpful if it arrives
at the wrong moment. Unlike offline video question answering, continuous
interaction requires decisions while evidence is still arriving. A user may
refer to an event that has just occurred or issue a standing instruction that
becomes actionable later. Speaking too early interrupts the interaction;
remaining silent after the relevant event misses the opportunity to help.
Proactive streaming therefore couples multimodal understanding with a temporal
decision: does the current evidence warrant a response?

Recent video-language models have moved toward continuous interaction by
combining streaming perception and temporally aligned dialogue with explicit
response control
\citep{chen2024videollmonline,wang2024mmduet,qian2025dispider,
wang2025streambridge,ding2025streammind,azad2026streamready}. Omni-modal
backbones further bring visual evidence, sound, and language into a shared model
\citep{xu2025qwen25omni,xu2025qwen3omni}. This progress provides the perception
and interaction interface needed for continuous assistance, but leaves an
important training question: how can a model learn from response prefixes
produced by its current policy rather than only from recorded assistant turns?

Supervised fine-tuning (SFT) optimizes reference continuations at annotated
prefixes. It does not directly revisit the prefixes produced when the policy
speaks too early, remains silent when a response is due, or begins an unsuitable
continuation despite making the correct speak-or-silence decision. Spoken
responses can also contribute many more training tokens than silence, creating
asymmetric supervision. AURA and StreamPro address this imbalance by reweighting
recorded supervision \citep{lu2026aura,li2026streampro}, but their targets remain
tied to annotated actions and prefixes. As the policy changes, the supervision
needed to correct its own continuations changes with it.

Live-OPSD targets this response-level gap through privileged-context
self-distillation (Figure~\ref{fig:framework}). At candidate decision points, the
current policy first generates a continuation from the streaming context alone.
The same model, now conditioned on a training-time hint that describes the
relevant evidence and intended behavior, provides distributional targets for
selected prefixes of those continuations. Distilling this privileged view guides
the unhinted policy on text it produced itself, combining learning with
privileged information \citep{vapnik2009new} and on-policy distillation
\citep{agarwal2024gkd}. Not all decision points carry the same training signal:
current disagreements expose errors, while consistent responses and silence
preserve behavioral coverage. Decision-Aware Selection (DAS) uses this structure
to construct a bounded update set.

We evaluate Live-OPSD as a post-training objective on Qwen2.5-Omni-7B and
Qwen3-Omni-30B-A3B. Controlled comparisons use a common streaming interface and
three continued-SFT objectives. On the 30B backbone, Live-OPSD attains an average
score of 59.64 on LiveProBench, compared with 54.38 for the strongest
continued-SFT baseline, a gain of 5.26 points. Released proactive systems provide
a separate system-level comparison on OVO-Bench and OmniMMI proactive alerting.
Ablations
compare the divergence objectives and evaluate the overall selection scheme
against a fixed-position control. Together, the results provide evidence that
post-training on current-policy response prefixes improves proactive streaming
behavior beyond continued SFT in the evaluated setting.

\begin{figure}[t]
\centering
\includegraphics[width=\linewidth]{figures/framework.pdf}
\caption{Live-OPSD training. The unhinted policy generates a continuation, while
the same current model uses a privileged hint to score its prefixes. DAS selects
scored decision points for a reverse-KL update. The updated policy then supplies
the next rollout and teacher.}
\label{fig:framework}
\end{figure}

\section{Related Work}

\paragraph{Streaming perception and interaction.}
Streaming video models must preserve information from an evolving visual history
while remaining responsive to new observations and user turns. VideoLLM-online
learns temporally aligned dialogue over continuous video, whereas Flash-VStream
and related memory-based approaches compress or retrieve historical visual
states for efficient online question answering
\citep{chen2024videollmonline,zhang2024flashvstream}. MMDuet instead represents
video and text as interleaved turns and augments the model with informative and
relevance heads for time-sensitive interaction \citep{wang2024mmduet}. These
methods establish the streaming interface on which proactive behavior can be
built, but their primary contribution is not an on-policy post-training
objective.

Proactive systems differ in where they place the response decision. Dispider
separates perception, decision, and reaction; StreamBridge combines a persistent
history with a separately trained activation model; and StreamMind invokes the
language model only when a query-conditioned event gate predicts a
response-worthy moment
\citep{qian2025dispider,wang2025streambridge,ding2025streammind}. StreamReady
learns an explicit answer-readiness signal, Em-Garde matches structured visual
proposals during streaming, and StreamAgent uses future-aware planning to choose
whether to wait, respond, or gather additional evidence
\citep{azad2026streamready,zheng2026emgarde,yang2025streamagent，qian2025dispider}. These designs
modify inference-time gating, planning, memory, or computation. Live-OPSD instead
introduces a post-training objective for an existing streaming policy and leaves
its native inference mechanism unchanged.

Open-source proactive models provide further system-level comparisons. AURA
combines streaming context management with a Silent-Speech Balanced objective;
JoyAI-VL-Interaction makes response decisions within a vision-language model and
can delegate generation to a background model; ROMA uses aligned audio-visual
streaming units and a dedicated speak head; and MOSS-VL-Realtime permits incoming
visual information during generation through cross-attention
\citep{lu2026aura,yao2026joyaivlinteraction,tian2026roma,wang2026mossvl}. Their
architectural and training differences are retained in our model-level
comparison and are not conflated with the controlled objective comparison on two
Qwen backbones.

\paragraph{Learning response timing.}
Supervised timing objectives primarily differ in how they weight recorded
speech and silence. AURA masks supervised turns and reweights silent messages,
whereas StreamPro's CB-Stream Loss balances decision positions and response text
\citep{lu2026aura,li2026streampro}. StreamPro subsequently applies GRPO with
turn-level and trajectory-level rewards, and MMDuet2 studies multi-turn
reinforcement learning for timely proactive responses
\citep{li2026streampro,wang2025mmduet2}. These sequence-level objectives directly
optimize timing rewards but require long-stream rollouts and an explicit reward
definition. Live-OPSD uses a different decomposition: the current response
decision determines which contexts receive the update, while a privileged
teacher distribution supplies the token-level learning signal on the policy's
own continuation.

\paragraph{Distillation with privileged information.}
Knowledge distillation transfers a teacher's output distribution to a student
\citep{hinton2015distilling}. On-policy variants evaluate the teacher on
student-generated continuations, thereby exposing the objective to prefixes
visited by the current student \citep{agarwal2024gkd,gu2024minillm}. This shift is
related to the learner-induced state distribution studied in imitation learning
\citep{ross2011reduction}. Live-OPSD applies the same principle at streaming
decision contexts. Learning with privileged information supplies the asymmetry
\citep{vapnik2009new}: teacher and student share the current parameters, but only
the teacher receives a training-time hint.

\paragraph{Evaluation of streaming behavior.}
OVO-Bench and OmniMMI evaluate complementary aspects of causal stream
understanding, including real-time perception, contextual reasoning, forward
active responding, proactive alerting, and interaction
\citep{li2025ovobench,omnimmi}. StreamArena extends this scope to
hour-scale open-ended streams and jointly evaluates real-time perception,
historical recall, proactive interaction, and tool use \citep{zhang2026streamarena}.
We use OVO-Bench and OmniMMI to test whether timing-oriented post-training
preserves broad streaming capabilities, while LiveProBench isolates response
timing under standing instructions
\citep{du2026liveprobenchstreamingvideomodels}.

\section{Streaming Interaction}
\label{sec:formulation}

\paragraph{States and actions.}
Let $x_t$ be the event arriving at step $t$, either an audio-visual stream chunk
or a user utterance. The visible interaction history is
$s_t=(x_1,a_1,\ldots,a_{t-1},x_t)$, where $a_t$ denotes the assistant's complete
turn. The action space contains a set of natural-language responses
$\mathcal{A}_{\mathrm{speak}}$ and a silence action $\sil$.
The arriving event induces two state types and their admissible actions:
\begin{equation}
  \mathcal{A}(s_t)=
  \begin{cases}
    \mathcal{A}_{\mathrm{speak}}, & x_t\text{ is a user utterance},\\[2pt]
    \mathcal{A}_{\mathrm{speak}}\cup\{\sil\}, & x_t\text{ is a stream chunk}.
  \end{cases}
  \label{eq:admissible}
\end{equation}
A user utterance requires a response, which can include an acknowledgment or an
explanation that the available evidence is insufficient. A stream chunk permits
either response or silence. Equation~\ref{eq:admissible} specifies the task
requirements, although the language-model decoder can still emit an inadmissible
action.
After acting, the state advances to $s_{t+1}=s_t\oplus a_t\oplus x_{t+1}$.
The policy has access to the visible prefix, not future observations.

\paragraph{Decision supervision.}
Write $d(a)=\mathbb{1}[a\ne\sil]$ for the response decision induced by an action.
This decision is defined by the action's meaning under the interaction protocol,
independently of the tokens used to represent response or silence.
A training point consists of a recorded context $\bar{s}_t$, a supervisory
decision $b_t\in\{0,1\}$, and a privileged hint $h_t$. The decision specifies
whether the annotated behavior is to speak. It does not specify the full
response. At a chunk step, $b_t=1$ marks a proactive response and $b_t=0$ marks
silence. The hint supplies a short description of the relevant visual or acoustic
evidence and its relation to the intended behavior.
Figure~\ref{fig:data} shows one annotated dialogue with these units in place.

\begin{figure}[t]
\centering
\includegraphics[width=\linewidth]{figures/data.pdf}
\caption{Supervision at streaming decision points. Each interleaved chunk or user
turn carries a decision $b_t$ and a privileged hint $h_t$. Streaming SFT learns
the recorded action, whereas Live-OPSD supplies the hint only to the teacher.}
\label{fig:data}
\end{figure}

\paragraph{Training instantiation.}
Streaming SFT initializes the policy used in our experiments, including its
handling of interleaved chunks, dialogue turns, and silence. Its target at each
step is the recorded action associated with $b_t$: an annotated response when
$b_t=1$ and the silence action when $b_t=0$. Although $b_t$ specifies only the
binary decision, the recorded action supplies the full token-level target.
Writing $\bar a_t=(\bar y_{t,1},\ldots,\bar y_{t,|\bar a_t|})$ for this action,
let $\mathcal S$ contain the silent assistant messages in a training instance
and let $t^+$ denote its final non-silent assistant message, when one is present.
We supervise every token of messages in $\mathcal S\cup\{t^+\}$ and exclude
earlier non-silent messages whose supporting evidence may have been removed by
window truncation. Accordingly,
\begin{equation}
  m_{t,\ell}=\mathbb{1}[t\in\mathcal S\cup\{t^+\}],\qquad
  w_{t,\ell}=\begin{cases}
    1/N_{\mathrm{sil}}, & t\in\mathcal S,\\
    1, & t=t^+,
  \end{cases}
  \label{eq:sft-mask}
\end{equation}
where $N_{\mathrm{sil}}=|\mathcal S|$ is the number of silent messages in the
instance. With $N_{\mathrm{sup}}=\sum_t\sum_{\ell=1}^{|\bar a_t|}m_{t,\ell}$,
the initialization objective is
\begin{equation}
  \mathcal{L}_{\mathrm{SFT}}(\phi)
    =-\frac{1}{N_{\mathrm{sup}}}
      \sum_t\sum_{\ell=1}^{|\bar a_t|}m_{t,\ell}w_{t,\ell}
      \log\pi_\phi\!\left(\bar y_{t,\ell}\mid
      \bar s_t,\bar y_{t,<\ell}\right).
  \label{eq:sft}
\end{equation}
Thus, silence supervision is preserved without allowing the number of silent
messages to dominate the informative response. The checkpoint obtained by
minimizing $\mathcal{L}_{\mathrm{SFT}}(\phi)$, denoted
\emph{Live-SFT-Zero}, initializes Live-OPSD and supplies the first reference
parameters $\theta_0$. The SSB-SFT row in Section~\ref{sec:main-results}
continues this objective from the same initialization, whereas Vanilla-SFT and
CB-SFT use the alternative objectives defined in Appendix~\ref{app:baselines}.
The hint $h_t$ is not part of this stage's input or target.

Live-OPSD then generates a new assistant continuation at each recorded context
$\bar{s}_t$ under the current decoding rule, while retaining earlier assistant
turns from the recorded history. The resulting action, including a silence
action, and all of its token prefixes therefore come from the current policy.
This response-level on-policy rollout exposes the update to current-model
prefixes without claiming that the entire preceding interaction was generated
on policy. At deployment, earlier generated turns become part of the evolving
history $s_t$. Training and deployment thus differ in the source of earlier
assistant turns, while the causal observation boundary in
Equation~\ref{eq:admissible} applies in both.

\section{Live-OPSD}
\label{sec:method}

Live-OPSD couples privileged-context distribution matching with adaptive
selection of streaming decision points. The two supervisory inputs play distinct
roles: the hint $h_t$ conditions the teacher, and the decision $b_t$ guides DAS.
Figure~\ref{fig:framework} shows how these paths meet in the update objective.

\subsection{Privileged-context self-distillation}
\label{sec:teacher}

A hint explains the evidence supporting an annotated response decision. For a
request to report entry into a restricted area, for example, it might state that
a person is visible outside the area and the assistant should remain silent.
The teacher receives the hint in its system context. The student receives the
same streaming observations and deployment instruction without the hint.
In our implementation, an external model annotates hints offline. It supplies
text, whereas the distillation teacher is the current policy conditioned on
that text. Appendix~\ref{app:hints} gives the annotation and prompt format.

Let $\theta_0$ denote the parameters at the start of an update. For each point,
the student generates $a_t=(y_{t,1},\ldots,y_{t,L_t})$ from
$\pi_{\theta_0}(\cdot\mid\bar{s}_t)$ under the decoding rule used for training.
At response position $\ell$, we compare
\begin{equation}
  \begin{aligned}
    P_t^{(\ell)}
      &=\pi_{\theta_0}(\cdot\mid\bar{s}_t,h_t,y_{t,<\ell}),\\
    Q_{\theta,t}^{(\ell)}
      &=\pi_\theta(\cdot\mid\bar{s}_t,y_{t,<\ell}),
  \end{aligned}
  \label{eq:teacher-student}
\end{equation}
where $y_{t,<\ell}$ is the student-generated prefix. Both roles predict the next
token after that same prefix, so teacher scoring requires no separately generated
teacher response. The target and generated continuation remain fixed while the
student is updated from $\theta_0$.

After the update, the new parameters are used for both rollout and teacher
scoring. The offline hints are reused, but the distributional targets are
recomputed. The teacher's additional context can make the evidence--decision
relationship explicit, with its usefulness depending on annotation quality and
the model's ability to interpret the hint. Appendix~\ref{app:information}
distinguishes the information supplied by a hint from the decision advantage
actually realized by the hinted model.

\subsection{A distributional objective}
\label{sec:truncated}

For teacher distribution $P$ and student distribution $Q$, our default objective
is reverse KL, $\mathcal D_{\mathrm{rKL}}(P,Q)=\KL(Q\Vert P)$. We compare it
with forward KL, $\mathcal D_{\mathrm{fKL}}(P,Q)=\KL(P\Vert Q)$, and symmetric
Jensen--Shannon divergence. For completeness, the implemented skew-JSD family is
\begin{equation}
  D_{\mathrm{JS}}^{\beta}(P,Q)
    =\beta\,\KL(P\Vert M_\beta)+(1-\beta)\,\KL(Q\Vert M_\beta),
  \qquad M_\beta=\beta P+(1-\beta)Q,
  \label{eq:jsd}
\end{equation}
where $0<\beta<1$ and $\beta=\tfrac12$ gives symmetric JSD. The unscaled
divergence is zero at both endpoints. Its scaled limits are
\[
  \lim_{\beta\to0}\frac{D_{\mathrm{JS}}^{\beta}(P,Q)}{\beta}
    =\KL(P\Vert Q),\qquad
  \lim_{\beta\to1}\frac{D_{\mathrm{JS}}^{\beta}(P,Q)}{1-\beta}
    =\KL(Q\Vert P).
\]
We therefore implement the two KL directions as separate objectives rather than
as JSD endpoint values. In every case, gradients flow through $Q$ while $P$
remains fixed.

Let $\mathcal{B}_{\mathrm{sel}}$ be the effective set of selected, teacher-scored
points defined in Section~\ref{sec:selection}. For a chosen divergence
$\mathcal D$, the update minimizes its token mean over the generated assistant
continuations, including silence actions:
\begin{equation}
  \mathcal{L}_{\mathcal D}(\theta;\theta_0)
    =\frac{1}{\sum_{t\in\mathcal{B}_{\mathrm{sel}}}L_t}
      \sum_{t\in\mathcal{B}_{\mathrm{sel}}}\sum_{\ell=1}^{L_t}
      \mathcal D\!\left(P_t^{(\ell)},Q_{\theta,t}^{(\ell)}\right).
  \label{eq:loss}
\end{equation}
We use $\mathcal D=\mathcal D_{\mathrm{rKL}}$ by default. The generated
continuations, selected points, and teacher targets are fixed during this update.
Gradients optimize the student's conditional distributions on these prefixes
without differentiating through the discrete decoding process. Thus,
current-policy generation supplies the training prefixes, while distillation
supplies the update. Each point contributes in proportion to its continuation
length under the global token-mean reduction. Appendix~\ref{app:impl} describes
the efficient target representation used in the implementation.

Appendix~\ref{app:decision-transfer} connects the default reverse-KL objective to
response decisions. It bounds the student's disagreement by the teacher's
disagreement plus cumulative reverse KL on the student-generated prefix that
resolves the decision and characterizes the approximation used in training.

\subsection{Decision-Aware Selection}
\label{sec:selection}

DAS uses the current policy's response decisions to allocate a budget of at most
$K$ training points per update. Eligible points are chunk steps, with user
utterances and their responses kept in context. For a generated action $a_t$, the
indicator $\delta_t=\mathbb{1}[d(a_t)\ne b_t]$ records disagreement with the
supervisory decision $b_t$. Eligible points split into decision-inconsistent
points $\mathcal{I}$ ($\delta_t=1$), consistent responses $\mathcal{R}$, and
consistent silence; teacher scoring covers $\mathcal{I}$, $\mathcal{R}$, and a
policy-independent subsample of consistent silence.

From this scored pool DAS keeps at most $K$ points in priority order: an
inconsistency quota with the two directions alternated, then consistent
responses, then silence spread across videos, relaxing to the full budget when
scored silence is scarce. Selection, not gradient weighting, depends on
$\delta_t$: a decision-consistent action can still yield a distillation signal,
because agreeing on the decision does not imply agreeing on the token
distributions. Selection is refreshed from new rollouts each update. At
inference, the policy uses the backbone's native response interface without
hints, DAS, or an extra classifier. Figure~\ref{fig:das} shows the outcome
classes and how the scored pool fills the budget; Appendices~\ref{app:selection}
and~\ref{app:das-analysis} give the quota rule and the disagreement it
concentrates, and Algorithm~\ref{alg:main} (Appendix~\ref{app:procedure})
summarizes one update.

\begin{figure}[t]
\centering
\includegraphics[width=\linewidth]{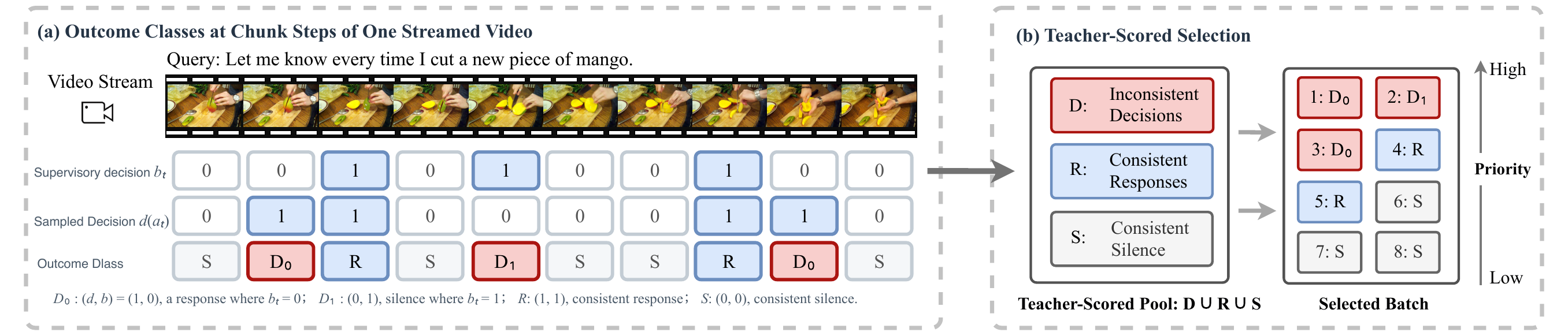}
\caption{Decision-Aware Selection. \textbf{(a)} Sampled decisions are grouped as
the two inconsistency directions, consistent responses, or consistent silence.
\textbf{(b)} DAS fills a budget of $K$ scored points from an inconsistency quota,
then consistent responses and silence. The decision label controls selection,
not gradient weighting.}
\label{fig:das}
\end{figure}

\section{Experiments}
\label{sec:experiments}

The evaluation examines whether post-training improves response timing while
preserving the understanding needed for useful responses. Within-backbone
comparisons assess the training procedure, system-level comparisons place it
alongside released proactive models, and ablations isolate the divergence and
selection rule.

\subsection{Experimental setup}
\label{sec:setup}

\paragraph{Backbones and training data.}

We use Qwen2.5-Omni-7B and Qwen3-Omni-30B-A3B
\citep{xu2025qwen25omni,xu2025qwen3omni}. Both are adapted to the same streaming
interaction protocol, with audio-visual chunks interleaved with user and assistant
turns. The training corpus includes direct questions and standing instructions
that require proactive responses. For post-training, dialogues are expanded into
causal decision contexts with aligned supervisory decisions and hints. Source
videos define the split boundary so that contexts from one source remain in one
split. Appendix~\ref{app:data} describes data construction, and
Appendix~\ref{app:impl} reports the run settings verified from the available
artifacts.

\paragraph{Supervised baselines.}
We compare three objectives on each backbone.
\emph{Vanilla-SFT} uses standard cross-entropy on reference assistant turns.
\emph{CB-SFT} uses StreamPro's CB-Stream Loss \citep{li2026streampro}.
\emph{SSB-SFT} uses AURA's Silent-Speech Balanced Loss\citep{lu2026aura}. Live-OPSD starts from a same streaming-SFT model and then applies the
post-training procedure in Section~\ref{sec:method}.

\paragraph{Open-source proactive models.}
The model-level comparison on LiveProBench (Table~\ref{tab:liveprobench}) includes
AURA, JoyAI-VL-Interaction, ROMA, and MOSS-VL-Realtime
\citep{lu2026aura,yao2026joyaivlinteraction,tian2026roma,wang2026mossvl}.
Each model retains the response mechanism and context handling
of its released inference configuration. For each benchmark, task-specific
scoring and aggregation follow the protocol described below and in
Appendix~\ref{app:metrics}.

\begin{table}[t]
\caption{LiveProBench results (\%). The average is computed from the unrounded
Event Sensitivity, Onset Response, Window Response, Anomaly Alert, Sustained
Silence, and Duplicate Counting scores. Boldface marks the best result within
each block, excluding Live-SFT-Zero.}
\label{tab:liveprobench}
\centering
\small
\resizebox{\textwidth}{!}{%
\begin{tabular}{lccccccccc}
\toprule
Method or model & \multicolumn{3}{c}{Event Sensitivity} & OR & WR & AA & SS & DC & Avg. \\
\cmidrule(lr){2-4}
 & Resp. & Sil. & Geo. & & & & & & \\
\midrule
\multicolumn{10}{l}{\emph{Open-source proactive models}}\\
ROMA                 & 32.87 & 61.79 & 45.07 & 19.27 & 6.25 & 9.24 & 2.17 & 2.99 & 14.16 \\
MOSS-VL-Realtime     & 82.87 & 50.41 & 64.63 & 38.07 & 35.00 & 24.79 & 19.57 & 7.46 & 31.59 \\
AURA                 & 34.26 & 99.19 & 58.29 & 62.84 & 60.00 & 37.82 & 25.00 & 0.00 & 40.66 \\
JoyAI-VL-Interaction & 47.22 & 78.86 & 61.02 & 52.29 & 47.50 & 18.91 & 30.43 & 35.82 & 41.00 \\
\midrule
\multicolumn{10}{l}{\emph{Qwen2.5-Omni-7B}}\\
\rowcolor{zerorow}Live-SFT-Zero-7B & 63.43 & 59.35 & 61.35 & 42.66 & 27.50 & 37.82 & 2.17 & 34.33 & 34.31 \\
\quad + Vanilla-SFT & 36.57 & \textbf{98.78} & 60.11 & 55.50 & \textbf{65.00} & 54.62 & \textbf{53.26} & 1.49 & 48.33 \\
\quad + CB-SFT   & 68.06 & 78.86 & \textbf{73.26} & \textbf{61.47} & 33.75 & 62.61 & 11.96 & \textbf{26.87} & 44.98 \\
\quad + SSB-SFT  & \textbf{75.93} & 64.23 & 69.83 & 55.05 & 17.50 & 49.58 & 3.26 & 7.46 & 33.78 \\
\rowcolor{ourrow}\quad + Live-OPSD & 41.20 & 95.12 & 62.60 & 59.63 & 58.75 & \textbf{65.13} & 40.22 & 7.46 & \textbf{48.97} \\
\midrule
\multicolumn{10}{l}{\emph{Qwen3-Omni-30B-A3B}}\\
\rowcolor{zerorow}Live-SFT-Zero-30B & 41.67 & 84.55 & 59.36 & 52.29 & 43.75 & 73.11 & 42.39 & 37.31 & 51.37 \\
\quad + Vanilla-SFT & 41.67 & 99.19 & 64.29 & 63.30 & 67.50 & 57.56 & \textbf{70.65} & 2.99 & 54.38 \\
\quad + CB-SFT   & 68.06 & 86.18 & \textbf{76.58} & \textbf{72.94} & 45.00 & 68.07 & 31.52 & 23.88 & 53.00 \\
\quad + SSB-SFT  & \textbf{72.69} & 78.05 & 75.32 & 64.68 & 41.25 & 65.55 & 23.91 & 32.84 & 50.59 \\
\rowcolor{ourrow}\quad + Live-OPSD & 40.28 & \textbf{99.59} & 63.34 & 61.93 & \textbf{70.00} & \textbf{68.49} & 59.78 & \textbf{34.33} & \textbf{59.64} \\
\bottomrule
\end{tabular}
}
\end{table}

\subsection{Benchmarks and metrics}
\label{sec:benchmarks}

We assess proactive response timing through event triggering and silence in
LiveProBench and proactive alerting (PA) in OmniMMI \citep{du2026liveprobenchstreamingvideomodels,omnimmi}.
OmniMMI's other tasks additionally assess the perceptual, contextual, and
interaction capabilities needed for useful responses.
OVO-Bench \citep{li2025ovobench} complements this coverage with understanding of
past and present evidence and forward active responding, which tests waiting for
the information needed to answer.

Table~\ref{tab:liveprobench} reports LiveProBench's six task scores and their
arithmetic mean, including Duplicate Counting. Event Sensitivity uses the
geometric mean of alert coverage and silent-segment accuracy, and we list both
sides alongside that mean; the window tasks score the first response within the
permitted interval, and Duplicate Counting uses exact-count accuracy. Task
coverage, temporal-window scoring, and full aggregation details are specified in
Appendix~\ref{app:metrics}.

\subsection{Main comparisons}
\label{sec:main-results}

\begin{wraptable}{r}{0.36\textwidth}
\vspace{-0.8em}
\caption{OmniMMI Proactive Alerting accuracy (\%).}
\label{tab:pa}
\centering
\footnotesize
\setlength{\tabcolsep}{3.5pt}
\begin{tabular*}{\linewidth}{@{\extracolsep{\fill}}lc@{}}
\toprule
Model & PA \\
\midrule
VideoLLM-Online & 0.50 \\
MMDuet                  & 22.00 \\
M4-a                  & 25.50 \\
AURA                & 37.50 \\
ROMA                & 37.50 \\
\midrule
Live-SFT-Zero-7B      & 31.50 \\
\quad + Live-OPSD      & 56.50 (+25.0) \\
Live-SFT-Zero-30B      & 39.50 \\
\quad + Live-OPSD     & \textbf{59.00 (+19.5)} \\
\bottomrule
\end{tabular*}
\end{wraptable}

Table~\ref{tab:liveprobench} is the primary comparison: it lists released
proactive models first and then contrasts training methods within each backbone,
including Live-SFT-Zero, the first-stage streaming-SFT checkpoint from which
Live-OPSD training starts. All backbone rows are evaluated with audio disabled,
and the reported 30B Live-OPSD model uses reverse KL. These matched-backbone blocks
isolate the effect of the training method from differences in model scale.
Tables~\ref{tab:pa} and~\ref{tab:ovo-bench} instead compare complete models on
OmniMMI PA and OVO-Bench, excluding the continued-SFT variants. Together, these
system-level comparisons complement the controlled LiveProBench results under
established external benchmark protocols. Full system-level comparisons and subtask
breakdowns for StreamingBench and OVO-Bench are provided in
Appendix~\ref{app:metrics}.

Response coverage and silence trade off: frequent responses cover more events but
interrupt, while frequent silence avoids interruptions but misses events. Event
Sensitivity scores both sides at once, so its geometric mean (Geo.) rises only
when response coverage (Resp.) and silence accuracy (Sil.) are both high; a
near-complete silence side with low coverage is not credited as balanced.

This trade-off is visible in the Qwen3-Omni-30B-A3B block of
Table~\ref{tab:liveprobench}. The reverse-KL Live-OPSD model has the highest
reported six-task average (59.64), exceeding Vanilla-SFT, CB-SFT, and SSB-SFT by
5.26, 6.64, and 9.05 points, respectively. The Live-SFT-Zero rows identify the
checkpoints used before Live-OPSD post-training. Appendix~\ref{app:case-study}
compares these methods on individual cases (Figure~\ref{fig:anomaly-cases}).

\WFclear
\begin{wraptable}{l}{0.36\textwidth}
\vspace{-0.8em}
\caption{OVO-Bench overall accuracy (\%) across released systems.}
\label{tab:ovo-bench}
\centering
\footnotesize
\setlength{\tabcolsep}{3.5pt}
\begin{tabular*}{\linewidth}{@{\extracolsep{\fill}}lc@{}}
\toprule
Model & All \\
\midrule
StreamAgent           & 49.40 \\
Streamo-7B            & 57.90 \\
ViSpeak               & 61.10 \\
MiniCPM-o-4.5         & 59.70 \\
AURA                  & 65.30 \\
\midrule
Qwen2.5-Omni-7B       & 50.30 \\
\quad + Live-SFT \& Live-OPSD     & 61.97 \\
Qwen3-Omni-30B-A3B    & 61.00 \\
\quad + Live-SFT \& Live-OPSD     & \textbf{65.84} \\
\bottomrule
\end{tabular*}
\end{wraptable}

The 7B comparison follows the same evaluation protocol. Live-OPSD obtains a
reported average of 48.97, compared with 48.33, 44.98, and 33.78 for Vanilla-SFT,
CB-SFT, and SSB-SFT, respectively. It leads Anomaly Alert, while CB-SFT leads
Event Sensitivity, Onset Response, and Duplicate Counting, and Vanilla-SFT leads
Window Response and Sustained Silence; Live-OPSD therefore has the highest point
estimate in both backbone blocks.

Table~\ref{tab:pa} foregrounds proactive response. The two Live-OPSD models
reach 56.50 and 59.00 on OmniMMI PA, compared with 37.50 for AURA. These published system-level results use different inference configurations;
controlled objective comparisons appear in Table~\ref{tab:liveprobench}. Table~\ref{tab:ovo-bench} reports overall OVO-Bench accuracy. Live-OPSD improves over
its backbone at both scales (50.30 to 61.97 at 7B, 61.00 to 65.84 at 30B) and
reaches 65.84 overall.

\subsection{Ablation studies}
\label{sec:ablation}

\paragraph{Divergence choice.}
\begin{wrapfigure}{r}{0.4\textwidth}
\centering
\includegraphics[width=0.4\textwidth]{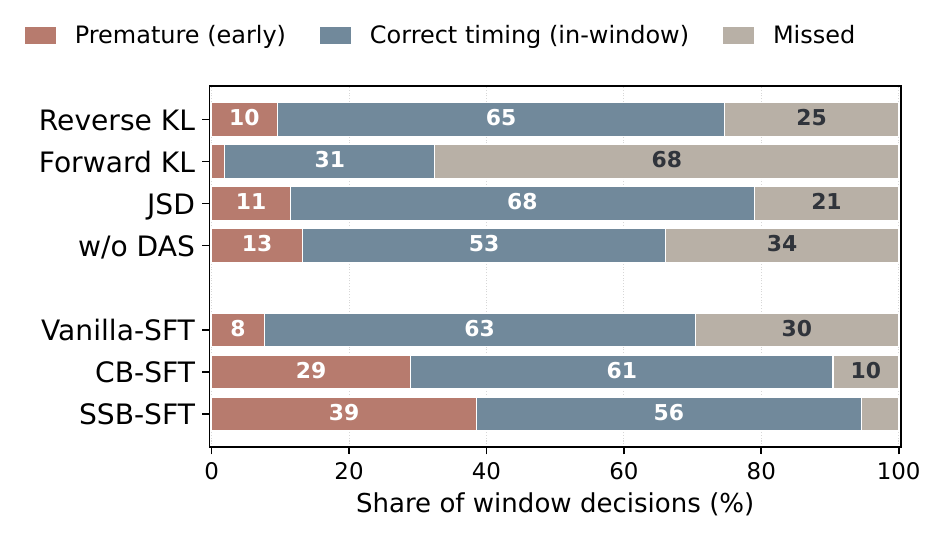}
\caption{LiveProBench timing outcomes for Qwen3-Omni.}
\label{fig:timing}
\end{wrapfigure}

We compare reverse KL, forward KL, and JSD under identical training setups.
For teacher $P$ and student $Q$, the two KL objectives are
$\KL(P\Vert Q)$ and $\KL(Q\Vert P)$, respectively. Their probability weighting
differs: the forward direction emphasizes teacher-supported alternatives, while
the reverse direction emphasizes alternatives currently favored by the student.
The comparison tests how this probability weighting affects learning in the
shared-parameter setting. The optimization budget and learning-rate selection protocol are held fixed. On
the shared held-out set, reverse KL leads proactive $F_1$, false-alarm rate, and
per-episode exact match for most of training, accepting a higher event-miss rate
as its conservative trade-off. Forward KL and the fixed-suffix w/o DAS variant
are less stable (Figure~\ref{fig:ablation-curves}, Appendix~\ref{app:metrics}). On
LiveProBench (Table~\ref{tab:ablation}), forward KL under-triggers and misses most
response windows (Figure~\ref{fig:timing}). Its event-sensitivity and
anomaly-alert scores fall to 43.03 and 11.34, producing a 36.10 mean. Reverse KL
and JSD are comparable in mean score (59.64 versus 59.21), while reverse KL is
higher on window response (70.00 versus 66.25) and duplicate counting (34.33
versus 22.39). We report reverse KL as the 30B Live-OPSD model in
Tables~\ref{tab:liveprobench}, \ref{tab:pa}, and~\ref{tab:ovo-bench}.

\begin{figure}[h]
\centering
\includegraphics[width=\textwidth]{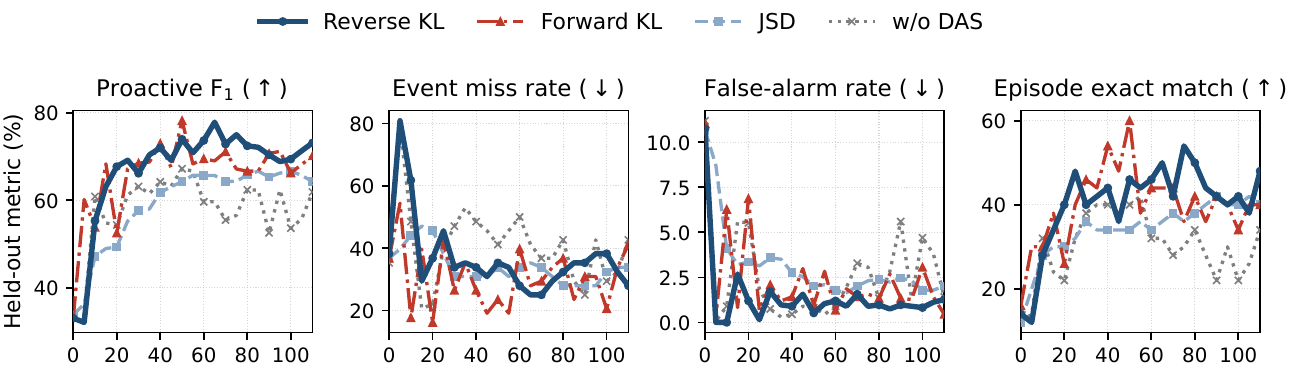}
\vspace{-0.45em}
\caption{Held-out ablation dynamics over 110 steps on 1,458 fixed decision
points; arrows indicate the improving direction.}
\label{fig:ablation-curves}
\end{figure}

\paragraph{Removing Decision-Aware Selection.}
The \emph{w/o DAS} variant uses a fixed-size suffix of decision points from each
source video, chosen before training independently of the policy. Each point
retains its complete preceding dialogue and media context. Teacher hints and the inference protocol are unchanged. The selected-point budget and update
count are matched, with candidate-generation and teacher-scoring costs reported
separately. This comparison tests adaptive allocation against fixed-position
selection. Appendix~\ref{app:selection} specifies
the fixed-position rule and its distinction from history truncation. Replacing DAS with fixed-position selection lowers the LiveProBench mean from
59.21 to 44.18 (Table~\ref{tab:ablation}). Duplicate Counting falls to 0.00, and
both premature and missed timing bands widen relative to the full JSD variant
(Figure~\ref{fig:timing}).

\begin{table}[t]
\caption{LiveProBench ablations on Qwen3-Omni (\%).
The main Live-OPSD result uses reverse KL. The w/o DAS control retains JSD and
replaces adaptive selection with a fixed-position rule.}
\label{tab:ablation}
\centering
\small
\begin{tabular*}{\textwidth}{@{\extracolsep{\fill}}lccccccc}
\toprule
Variant & ES & OR & WR & AA & SS & DC & Avg. \\
\midrule
Reverse KL & 63.34 & 61.93 & \textbf{70.00} & 68.49 & 59.78 & \textbf{34.33} & \textbf{59.64} \\
Forward KL & 43.03 & 31.19 & 58.75 & 11.34 & 54.35 & 17.91 & 36.10 \\
JSD & \textbf{67.08} & \textbf{67.43} & 66.25 & \textbf{70.17} & \textbf{61.96} & 22.39 & 59.21 \\
w/o DAS & 59.58 & 55.50 & 51.25 & 54.20 & 44.57 & 0.00 & 44.18 \\
\bottomrule
\end{tabular*}
\end{table}

\section{Discussion and Limitations}
\label{sec:discussion}

\paragraph{Scope and transfer requirements.}
Live-OPSD is a post-training principle for streaming interaction, instantiated with a streaming SFT checkpoint and a specific protocol. Defined through conditional policies and semantic actions rather than a mandatory two-stage framework, it requires aligned decision supervision and conditional likelihood evaluation with and without privileged context for protocol adaptation. For example, JoyAI-VL-Interaction marks silence and response with dedicated tokens~\citep{yao2026joyaivlinteraction}; the decision map $d$ reads those markers while preserving the native protocol, with teacher and student sharing a vocabulary within each backbone. Richer action spaces, such as introducing interrupt and correction actions, require corresponding supervision and selection categories.

\paragraph{Current instantiation.}
Offline hints and recorded histories make training tractable. Pilot checks found Qwen3-Omni inconsistent as its own hint annotator over multi-turn histories, so we use an external model for annotation and reserve the hinted policy for distributional scoring—motivating an implementation choice rather than revealing a backbone limitation. Generating continuations at recorded histories provides response-level on-policy supervision but mismatches model-generated deployment histories. Thus, the source of histories and feedback is an implementation choice, whereas access only to observations at each decision is a causal constraint.

\paragraph{Toward online self-training.}
Online hint generation and trajectory-level on-policy generation would align
feedback with student-generated histories, and reliable multi-turn
self-assessment could eventually let the model propose its own hints, provided
they stay grounded in the observed stream and add information beyond its
unassisted judgment. Because the current method still depends on external hints
and supervised response decisions, these are steps toward online self-training
rather than a fully self-supervised procedure.

\section{Conclusion}

We presented Live-OPSD, a post-training method that uses privileged context to supervise a streaming policy on its own continuations. A hinted model copy supplies token-level targets, while Decision-Aware Selection allocates the update budget by response decisions. On Qwen3-Omni, Live-OPSD scores 59.64 on LiveProBench, 5.26 points above the strongest continued-SFT baseline, while remaining competitive on broader benchmarks. Live-OPSD thus provides evidence for privileged on-policy distillation as a promising training principle for proactive streaming video models.

\subsubsection*{AI Usage Statement}
Generative AI tools were used only for language polishing during manuscript preparation. They were also used to synthesize SFT data and offline privileged hints as training supervision; the hint production procedure is described in Appendix~\ref{app:hints}. They were not used to execute training or evaluation runs or to generate or alter the reported experimental measurements. The authors take full responsibility for the submitted content.

\subsubsection*{Ethics Statement}
All videos used in this study are drawn from publicly available datasets, and no additional private video data are introduced. The study is limited to model training and benchmark evaluation. Potential biases and privacy considerations inherited from the source datasets and pretrained models remain applicable.

\subsubsection*{Reproducibility Statement}
Sections~\ref{sec:method} and~\ref{sec:setup} specify the method and comparison
protocol. The appendices describe data construction, baseline objectives,
selection, metrics, and settings verified from the available run artifacts. To avoid introducing unverified details, we report only configurations supported by the retained experimental records.

\bibliography{live_opsd}
\bibliographystyle{iclr2027_conference}

\appendix

\section{Training data and streaming contexts}
\label{app:data}

Each dialogue interleaves audio-visual chunks with user instructions and
assistant actions. Direct questions supervise responses from evidence available
at the question time. Standing instructions instead specify a condition to
monitor, with silence between relevant events. A decision context includes all
preceding turns and the media visible at that point. Flattening a dialogue into
training points changes the units sampled for optimization, not the information
available within each context.

For Live-OPSD, each retained point has a binary supervisory decision and a
teacher hint. Points without usable teacher supervision are excluded from the
effective distillation set. Data statistics distinguish source videos, dialogues, and decision points
because multiple contexts can come from one source. The split is established before
expansion, and training sources are checked against benchmark sources to avoid
overlap. Complete corpus counts are not available in the current artifact and are
therefore not reported here.

The Live-OPSD dataset contains 3,712 training instances. Both Live-OPSD and the continued-SFT baselines use this dataset, with the latter trained on its reference responses. The preceding streaming-SFT stage uses approximately 196,000 training instances. All source videos are drawn from publicly available datasets.

\section{Privileged hints and prompts}
\label{app:hints}

An external annotation model produces hints offline from the dialogue, reference
decision, and source media. The annotator can access the full clip, but the hint
concerns the evidence relevant to the current decision. It states what is visible
or audible, whether the assistant should respond, and the reason. Because it is
analysis rather than the answer, the reference response never appears inside the
hint. The hint is passed to the current policy as privileged context; this
hinted policy supplies
the teacher distributions in Equation~\ref{eq:teacher-student}. The student's
prompt and multimodal input remain bounded by the current streaming time.

The deployed system prompt is:
\begin{quote}\small
You are an AI assistant that processes real-time streaming video. For each video
stream chunk in every interaction turn, decide whether to remain silent or
respond. \\
- If you remain silent, output only \silencetok. \\
- If you respond, do so
naturally at the appropriate moment.
\end{quote}
The teacher and student receive the same multimodal input and the same
interaction to be answered, while the teacher additionally receives the
point-specific hint in current stream context. They share the same chat
template, tokenizer, and media representation. Their prompt lengths can differ,
so response positions are aligned relative to the end of each prompt.

For a standing instruction to report a person in a white shirt entering an area,
an illustrative hint is:
\begin{quote}\small
A person in a dark shirt is visible outside the monitored area, but no person in
a white shirt has entered it, so the assistant should remain silent.
\end{quote}
The hint is a teacher input, not a reference sentence for the student to repeat.
The current artifact does not preserve the annotation-model identity or retention
counts required to regenerate these hints exactly.

\section{Selection details and fixed-selection control}
\label{app:selection}

Let $\mathcal{I}$, $\mathcal{R}$, and $\mathcal{S}$ denote the scored pools of
decision-inconsistent points, consistent responses, and consistent silence.
Write $\mathcal{I}_0=\{t\in\mathcal{I}:b_t=0\}$ and
$\mathcal{I}_1=\{t\in\mathcal{I}:b_t=1\}$ for the two inconsistency directions.
Their orders are randomized independently and interleaved until one pool is
exhausted, after which the remaining points of the other pool follow.

A fraction $\alpha\in(0,1]$ sets the nominal discrepancy quota
$q=\max\{1,\lfloor\alpha K\rfloor\}$.
If the scored consistent-silence pool has fewer than $q$ points, the quota is
relaxed to $K$. Otherwise, at most $q$ inconsistent points enter the ordered
selection. The remaining slots are filled first from randomized consistent
responses, then from consistent-silence points distributed round-robin across
videos, with randomized order inside each video. Selection affects the sample
composition; it does not introduce inverse-frequency weights into
Equation~\ref{eq:loss}.

The consistent-silence scoring subset is determined independently of the current
response probabilities, using a stable point-level subsampling rule. The
selected effective set contains only scored points. The scoring fraction, quota,
and effective number of contributing points jointly determine the composition
and cost of an update; the current run artifacts retain the selected budget but
not the other two values.

For \emph{w/o DAS}, each source video contributes a fixed-size suffix of its
decision points. This set is chosen before training and is unchanged by the
current policy. Earlier dialogue turns and media remain in the context of every
retained point. The available run artifacts identify the 128-point selected batch but not the
per-video suffix length. All retained eligible points contribute without
decision-dependent prioritization or decision-dependent teacher admission.
Validation and test streams retain their full decision schedules. Matching the
update budget does not imply equal total computation: the full method also
generates candidates that may not contribute to an update.

\section{Training procedure}
\label{app:procedure}

Algorithm~\ref{alg:main} gives one Live-OPSD update; rollout, teacher scoring,
and selection are refreshed at each iteration.

\begin{algorithm}[h]
\caption{One Live-OPSD update with Decision-Aware Selection}
\label{alg:main}
\begin{algorithmic}[1]
\Require Current parameters $\theta_0$; eligible points
  $\{(\bar{s}_t,h_t,b_t)\}_{t\in\mathcal{B}}$;
  budget $K$, quota fraction $\alpha$, step size $\eta$
\Ensure Updated student parameters $\theta$
\For{$t\in\mathcal{B}$}
  \State $a_t\gets\operatorname{Generate}(\pi_{\theta_0},\bar{s}_t)$
  \State $\delta_t\gets\mathbb{1}[d(a_t)\ne b_t]$
\EndFor
\State $\mathcal{I}\gets\{t:\delta_t=1\}$;
  $\mathcal{R}\gets\{t:\delta_t=0,\ b_t=1\}$
\State $\mathcal{S}\gets$ a policy-independent subsample of
  $\{t:\delta_t=0,\ b_t=0\}$
\State $\mathcal{B}_{\mathrm{score}}\gets\mathcal{I}\cup\mathcal{R}\cup\mathcal{S}$
\For{$t\in\mathcal{B}_{\mathrm{score}}$}
  \For{$\ell=1,\ldots,L_t$}
    \State $P_t^{(\ell)}\gets\pi_{\theta_0}(\cdot\mid\bar{s}_t,h_t,y_{t,<\ell})$
  \EndFor
\EndFor
\State $q\gets\max\{1,\lfloor\alpha K\rfloor\}$
\If{$|\mathcal{S}|<q$}
  \State $q\gets K$
\EndIf
\State Order $\mathcal{I}$ by alternating its two inconsistency directions;
  randomize within each direction
\State Retain at most $q$ points from $\mathcal{I}$; randomize $\mathcal{R}$;
  order $\mathcal{S}$ across videos
\State $\mathcal{B}_{\mathrm{sel}}\gets$ up to $K$ points from the ordered groups
  $\mathcal{I}$, $\mathcal{R}$, $\mathcal{S}$ in succession
\For{$t\in\mathcal{B}_{\mathrm{sel}}$}
  \For{$\ell=1,\ldots,L_t$}
    \State $Q_{\theta,t}^{(\ell)}\gets\pi_\theta(\cdot\mid\bar{s}_t,y_{t,<\ell})$
  \EndFor
\EndFor
\State $\theta\gets\theta_0-\eta\nabla_\theta\mathcal{L}_{\mathcal D}(\theta;\theta_0)
  \big|_{\theta=\theta_0}$
\end{algorithmic}
\end{algorithm}

\section{Supervised baseline objectives}
\label{app:baselines}

The supervised baselines share the streaming representation, reference assistant
continuations, initialization checkpoint, and training schedule. They differ
only in which assistant tokens are supervised and how those tokens are weighted.
For a target token $\bar y_{t,\ell}$, write
\begin{equation}
  \ell_{t,\ell}(\phi)
    =-\log\pi_\phi\!\left(\bar y_{t,\ell}\mid
      \bar s_t,\bar y_{t,<\ell}\right).
  \label{eq:token-ce}
\end{equation}

\paragraph{Vanilla-SFT.}
Let $\mathcal A$ be the set of reference assistant messages retained in the
training instances and let
$N_{\mathrm{all}}=\sum_{t\in\mathcal A}|\bar a_t|$. Vanilla-SFT supervises all
of their target tokens with equal weight:
\begin{equation}
  \mathcal L_{\mathrm{vanilla}}(\phi)
    =\frac{1}{N_{\mathrm{all}}}
      \sum_{t\in\mathcal A}\sum_{\ell=1}^{|\bar a_t|}
      \ell_{t,\ell}(\phi).
  \label{eq:vanilla-sft}
\end{equation}

\paragraph{Silent-Speech Balanced SFT.}
SSB-SFT uses the supervision mask and message-level weighting in
Equation~\ref{eq:sft-mask}, following the silent-speech balanced objective of
AURA \citep{lu2026aura}. Within each training instance, all silent assistant
messages and the final non-silent assistant message are retained. Earlier
non-silent messages are masked out because a truncated window need not retain
the evidence aligned with those responses. If the instance contains
$N_{\mathrm{sil}}$ silent messages, every token in each silent message receives
weight $1/N_{\mathrm{sil}}$, while every token in the retained non-silent message
receives unit weight. With $N_{\mathrm{sup}}$ denoting the unweighted number of
retained target tokens, the resulting loss is exactly
Equation~\ref{eq:sft}. Here $N_{\mathrm{sil}}$ counts messages rather than
tokens and is computed per training instance. Live-SFT-Zero is trained with
this objective; the SSB-SFT baseline continues the same supervised objective
from that checkpoint for the post-training comparison.

\paragraph{Class-Balanced SFT.}
CB-SFT transfers the supervised CB-Stream objective \citep{li2026streampro} to
our two omni-modal backbones. Let $\mathcal D$ denote the target positions at
which the model chooses between the silence and response decision tokens, and
let $\mathcal T$ contain the subsequent response-text positions. For decision
class $c\in\{0,1\}$, let $n_c$ be its count in the current batch. The effective
sample count and normalized class weight are
\begin{equation}
  E_c=\frac{1-\beta^{n_c}}{1-\beta},\qquad
  \alpha_c=\frac{E_c^{-1}}{\sum_{j\in\{0,1\}}E_j^{-1}}\,,
  \label{eq:cb-weights}
\end{equation}
where $\beta\in[0,1)$ controls the strength of reweighting. For each target
position $i$, define
\begin{equation}
  w_i^{\mathrm{CB}}=\begin{cases}
    \alpha_{c_i}, & i\in\mathcal D,\\
    \lambda_{\mathrm{text}}, & i\in\mathcal T,
  \end{cases}
  \qquad
  \mathcal L_{\mathrm{CB}}(\phi)
    =\frac{1}{N_{\mathrm{CB}}}\sum_{i\in\mathcal D\cup\mathcal T}
      w_i^{\mathrm{CB}}\ell_i(\phi),
  \label{eq:cb-sft}
\end{equation}
where $c_i$ is the silence or response class at decision position $i$,
$\lambda_{\mathrm{text}}$ controls the contribution of linguistic tokens, and
$N_{\mathrm{CB}}=|\mathcal D|+|\mathcal T|$. This baseline contains only the
supervised objective, not StreamPro's subsequent policy-optimization stage. The
available run artifacts do not preserve the baseline-specific values of
$\beta$ and $\lambda_{\mathrm{text}}$, so we do not infer them from another
implementation.

The CB-SFT and SSB-SFT names therefore identify controlled objective variants.
The AURA entry in Tables~\ref{tab:liveprobench} and \ref{tab:pa} instead refers to the released complete system.

\section{Experimental configuration}
\label{app:impl}

\paragraph{Data format and preprocessing.}
Each stream is segmented into one-second chunks. A chunk is represented by two
frames, i.e.\ a 2\,fps visual rate, together with a 16\,kHz mono audio segment;
every frame is rescaled to at most $448\times448$ pixels ($200{,}704$). Chunks are
encoded as separate image and audio inputs rather than as audio-in-video, so a
multi-chunk stream maps to an unambiguous token sequence. Standing instructions,
user turns, and assistant decisions are interleaved with these chunk tokens in
streaming order, and the same representation is used for teacher scoring and
student rollout. The context window is $65{,}536$ tokens: Qwen3-Omni-30B-A3B uses
its native 64k context, and the smaller Qwen2.5-Omni backbone (native 32k) is
extended to the same window with YaRN \citep{peng2024yarn}.

\begin{table}[h]
\caption{Verified configuration of the Qwen3-Omni-30B-A3B reverse-KL run used in
the main result and divergence ablation.}
\label{tab:config}
\centering
\small
\begin{tabular}{lc}
\toprule
Setting & Value \\
\midrule
Learning rate & $5\!\times\!10^{-6}$ \\
Selected-point budget $K$ & 128 \\
Teacher support size $k$ & 8 \\
Log-probability floor / loss clipping & $-10$ / $+10$ \\
Context length & 65{,}536 \\
Chunk duration / visual sampling rate & 1\,s / 2\,fps \\
Spatial resolution / audio preprocessing & $448^2$ / 16\,kHz mono \\
Decoding parameters & greedy \\
\bottomrule
\end{tabular}
\end{table}
The current artifacts do not provide a complete run manifest for the 7B model or
supervised baselines. We therefore omit unverified checkpoint identifiers,
seeds, hardware time, and baseline-specific coefficients rather than infer them
from script defaults.

\paragraph{Compact teacher targets.}
Teacher scoring evaluates response probabilities under the hint-bearing context.
Let $\mathcal{K}_t^{(\ell)}$ contain the $k$ tokens with greatest probability
under $P_t^{(\ell)}$. The teacher and student distributions are restricted to
these same tokens and renormalized:
\begin{equation}
  \widetilde{P}_t^{(\ell)}(v)
    =\frac{P_t^{(\ell)}(v)}{\sum_{u\in\mathcal{K}_t^{(\ell)}}P_t^{(\ell)}(u)},
  \qquad
  \widetilde{Q}_{\theta,t}^{(\ell)}(v)
    =\frac{Q_{\theta,t}^{(\ell)}(v)}{\sum_{u\in\mathcal{K}_t^{(\ell)}}Q_{\theta,t}^{(\ell)}(u)},
  \label{eq:support}
\end{equation}
for $v\in\mathcal{K}_t^{(\ell)}$. In the implementation, each divergence in
Equation~\ref{eq:loss} is evaluated on these normalized distributions. For a
response of length $L_t$, the retained targets require $O(kL_t)$ probability
values and token identities rather than $O(|\mathcal{V}|L_t)$ values. This
reduces target storage and transfer; it does not eliminate the model computation
needed to obtain those probabilities. Teacher targets and support identities are
fixed during the student update. The model used for subsequent generation and
scoring is synchronized with the updated student before the next iteration.

Loss aggregation includes only valid response positions with scored targets.
Padding, prompt positions, and absent teacher targets do not contribute to the
numerator or denominator. All divergence variants use the same support,
normalization, and token mask. Any numerical floor or clipping applied during
training is reported in Table~\ref{tab:config}, since these operations can alter
the objective at extreme probabilities.

\section{Evaluation details}
\label{app:metrics}

\paragraph{Task coverage and benchmark aggregates.}
OmniMMI includes state grounding (SG), action prediction (AP), multi-turn
dependency reasoning (MD), speaker identification (SI), proactive alerting (PA),
and proactive turn-taking \citep{omnimmi}. Table~\ref{tab:pa} reports only PA,
which measures whether a model alerts the user when an unprompted response is
warranted. Systems without a published numeric PA result are omitted; no
aggregate is reconstructed from the remaining OmniMMI tasks.

StreamingBench reports real-time visual understanding, omni-source understanding,
and contextual understanding \citep{lin2024streaming}. Its overall score follows
the benchmark's task aggregation, rather than an unweighted average of category
means when the categories contain different numbers of tasks.

OVO-Bench reports backward tracing, real-time visual perception, and forward
active responding \citep{li2025ovobench}. Category scores and the aggregate use
the same benchmark release, with the online observation boundary fixed.

\paragraph{Model response decisions.}
For a model using the explicit silence protocol, an empty output or the standalone
silence marker denotes silence. Text accompanying that marker is treated as a
response. Models with a native response gate are scored according to their
released gate output. Native protocol adapters preserve the streaming observation
boundary and map the model's decision into the benchmark's response record.

\paragraph{LiveProBench aggregation.}
LiveProBench presents each video incrementally under a standing instruction and
queries the model at annotated times. Table~\ref{tab:liveprobench} reports all six
tasks; their semantics and representative cases are presented in
Section~\ref{app:case-study}. For ES, let $u$ be the fraction of annotated
alerts covered by a response and $v$ the fraction of annotated silent segments
with no response. Its score is $\sqrt{uv}$. A silent segment is a contiguous
annotated interval, so one interruption invalidates that segment. OR, WR, AA,
and SS use the fraction of episodes whose first response falls inside the
permitted interval; early and missing responses are reported separately. DC uses
final-total exact-count accuracy and does not score whether intermediate replies
align with individual completion times. The headline average is the arithmetic
mean of all six task scores, including DC, rather than a pooled decision accuracy.

\paragraph{Coverage and content.}
Runs covering different examples are not directly compared, and incomplete runs
are not treated as full benchmark results. The tables report aggregate scores;
example counts and inference-failure counts remain in the evaluation artifacts. The optional
semantic-content check in LiveProBench is distinct from its timing score.
OmniMMI, StreamingBench, and OVO-Bench retain their benchmark-specific answer
scoring. Evaluator identities and benchmark versions are recorded alongside the
model checkpoint and input configuration.

\paragraph{StreamingBench system-level comparison.}
Table~\ref{tab:sb-system} reports all 18 subtasks for released systems and for
Live-OPSD with audio enabled. The 30B variant has the highest overall score and
leads the RTVU and OSU group averages among the displayed systems; AURA is
strongest on CU.

\begin{table}[h]
\caption{StreamingBench system-level performance (\%) on all 18 subtasks.
Subtask abbreviations follow the benchmark definition \citep{lin2024streaming};
external comparison results follow the AURA evaluation \citep{lu2026aura}.
Our Live-OPSD variants are evaluated locally with audio.
For external systems, group and overall scores follow the source aggregation
and need not equal unweighted means of the displayed subtasks. ``All'' is not
the mean of the three group averages. Boldface marks the best group or overall
score.}
\label{tab:sb-system}
\centering
\footnotesize
\setlength{\tabcolsep}{3pt}
\resizebox{\textwidth}{!}{%
\begin{tabular}{l*{22}{c}}
\toprule
& \multicolumn{11}{c}{RTVU}
& \multicolumn{5}{c}{OSU}
& \multicolumn{5}{c}{CU} & \\
\cmidrule(lr){2-12}\cmidrule(lr){13-17}\cmidrule(lr){18-22}
Model & OP & CR & CS & ATP & EU & TR & PR & SU & ACP & CT & Avg
& ER & SCU & SD & MA & Avg & MCU & ACU & SQA & PO & Avg & All \\
\midrule
Gemini-1.5-Pro & 79.0 & 80.5 & 83.5 & 79.7 & 80.0 & 84.7 & 77.8 & 64.2 & 72.0 & 48.7 & 75.7 & 46.8 & 39.6 & 74.9 & 80.0 & 60.2 & 51.4 & 40.7 & 54.8 & 45.1 & 48.7 & 67.10 \\
GPT-4o & 77.1 & 80.5 & 83.9 & 76.5 & 70.2 & 83.8 & 66.7 & 62.2 & 69.1 & 49.2 & 73.3 & 41.2 & 37.2 & 43.6 & 56.0 & 44.5 & 41.2 & 38.4 & 32.8 & 56.9 & 38.7 & 60.20 \\
StreamAgent & 79.6 & 78.3 & 79.3 & 75.9 & 74.7 & 76.9 & 82.9 & 66.3 & 73.7 & 55.4 & 74.3 & 35.9 & 26.3 & 38.9 & 44.0 & 36.3 & 39.7 & 30.3 & 39.6 & 28.9 & 34.6 & 57.00 \\
ViSpeak & 79.8 & 71.1 & 81.4 & 78.8 & 74.5 & 70.1 & 63.9 & 64.2 & 71.4 & 28.0 & 70.4 & 47.2 & 56.4 & 61.6 & 81.2 & 61.6 & 49.2 & 36.4 & 39.2 & 50.8 & 43.9 & 62.60 \\
MiniCPM-o-4.5 & 80.9 & 88.3 & 82.0 & 85.0 & 78.9 & 85.4 & 78.7 & 67.1 & 75.6 & 56.0 & 78.2 & 44.4 & 28.0 & 41.6 & 54.4 & 42.1 & 35.2 & 44.0 & 51.2 & 47.6 & 44.5 & 62.70 \\
AURA & 87.5 & 84.4 & 93.4 & 89.5 & 83.2 & 87.9 & 86.1 & 76.4 & 82.4 & 47.7 & 83.2 & 56.4 & 48.4 & 62.8 & 80.4 & 62.0 & 54.8 & 70.8 & 57.2 & 53.2 & \textbf{59.0} & 73.10 \\
\midrule
Live-OPSD-7B & 86.9 & 84.4 & 90.2 & 90.8 & 85.1 & 90.3 & 85.2 & 77.2 & 80.2 & 43.0 & 81.3 & 63.2 & 51.2 & 74.8 & 89.6 & 69.7 & 41.2 & 57.2 & 48.0 & 35.2 & 45.4 & 70.80 \\
Live-OPSD-30B & 90.2 & 85.9 & 94.3 & 90.5 & 86.3 & 90.0 & 88.9 & 80.1 & 83.9 & 42.5 & \textbf{83.3} & 64.0 & 64.8 & 82.0 & 92.0 & \textbf{75.7} & 48.4 & 68.4 & 58.4 & 56.8 & 58.0 & \textbf{76.00} \\
\bottomrule
\end{tabular}%
}
\end{table}

\paragraph{OVO-Bench system-level comparison.}
Table~\ref{tab:ovo-system} reports the full OVO-Bench comparison, with per-subtask
scores for the released systems and for our backbones and their Live-OPSD
variants. Live-OPSD leads real-time visual perception and backward tracing, while
forward active responding stays below the strongest released system.

\begin{table}[h]
\caption{OVO-Bench system-level performance (\%) on all 12 subtasks.
Subtask abbreviations follow the benchmark definition \citep{li2025ovobench};
external comparison results follow the AURA evaluation \citep{lu2026aura}.
Our Qwen-Omni backbones and their Live-OPSD variants are evaluated locally,
video only. ``All'' is the macro-average of the RTVP, BT,
and FAR group averages. Boldface marks the best group or overall score.}
\label{tab:ovo-system}
\centering
\footnotesize
\setlength{\tabcolsep}{3pt}
\resizebox{\textwidth}{!}{%
\begin{tabular}{l*{16}{c}}
\toprule
& \multicolumn{7}{c}{RTVP}
& \multicolumn{4}{c}{BT}
& \multicolumn{4}{c}{FAR} & \\
\cmidrule(lr){2-8}\cmidrule(lr){9-12}\cmidrule(lr){13-16}
Model & OCR & ACR & ATR & STU & FPD & OJR & Avg
& EPM & ASI & HLD & Avg & REC & SSR & CRR & Avg & All \\
\midrule
Gemini-1.5-Pro & 85.9 & 67.0 & 79.3 & 58.4 & 63.4 & 62.0 & 69.3 & 58.6 & 76.4 & 52.6 & 62.5 & 35.5 & 74.2 & 61.7 & \textbf{57.2} & 63.00 \\
GPT-4o & 69.8 & 64.2 & 71.6 & 51.1 & 70.3 & 59.8 & 64.5 & 57.9 & 75.7 & 48.7 & 60.8 & 73.2 & 27.6 & 59.4 & 53.4 & 59.50 \\
StreamAgent & 71.2 & 53.2 & 63.6 & 53.9 & 67.3 & 58.7 & 61.3 & 54.8 & 58.1 & 25.8 & 41.7 & 35.9 & 48.4 & 52.0 & 45.4 & 49.40 \\
Streamo-7B & 77.2 & 66.1 & 76.7 & 45.5 & 66.3 & 72.8 & 67.4 & 55.6 & 58.1 & 33.9 & 49.2 & 30.8 & 57.6 & 82.5 & 57.0 & 57.90 \\
ViSpeak & 75.2 & 58.7 & 71.6 & 51.1 & 74.3 & 66.9 & 66.3 & 59.9 & 48.7 & 64.0 & 57.5 & 33.8 & 68.5 & 60.4 & 54.3 & 61.10 \\
MiniCPM-o-4.5 & 73.2 & 65.1 & 75.0 & 59.0 & 69.3 & 62.5 & 67.3 & 55.2 & 66.9 & 45.7 & 55.9 & 43.0 & 66.0 & 58.8 & 55.9 & 59.70 \\
AURA & 89.9 & 79.8 & 80.2 & 70.2 & 77.2 & 81.5 & 79.8 & 54.9 & 67.6 & 58.6 & 60.4 & 30.4 & 75.8 & 61.3 & 55.8 & 65.30 \\
\midrule
Qwen2.5-Omni-7B & 73.2 & 50.5 & 71.6 & 50.0 & 67.3 & 61.4 & 62.3 & 55.6 & 56.8 & 9.1 & 40.5 & 34.0 & 57.2 & 53.3 & 48.2 & 50.30 \\
Live-OPSD-7B & 90.6 & 79.8 & 78.5 & 61.2 & 79.2 & 77.7 & 77.8 & 62.0 & 64.9 & 41.4 & 56.1 & 30.4 & 68.5 & 57.1 & 52.0 & 61.97 \\
Qwen3-Omni-30B-A3B & 79.2 & 60.6 & 74.1 & 57.9 & 76.2 & 65.8 & 69.0 & 59.3 & 65.5 & 48.9 & 57.9 & 38.8 & 69.3 & 60.0 & 56.1 & 61.00 \\
Live-OPSD-30B & 92.6 & 88.1 & 84.5 & 64.0 & 82.2 & 79.4 & \textbf{81.8} & 66.7 & 67.6 & 61.3 & \textbf{65.2} & 34.8 & 67.3 & 49.6 & 50.6 & \textbf{65.84} \\
\bottomrule
\end{tabular}%
}
\end{table}

\paragraph{Held-out ablation dynamics.}
During the divergence and DAS ablations, proactive decision quality is tracked on
a fixed held-out set of 1458 decision points, identical across all legs. Because
this set does not depend on each run's on-policy batch composition, the learning
curves are directly comparable across legs. Figure~\ref{fig:ablation-curves}
plots proactive $F_1$, event-miss rate, false-alarm rate, and per-episode exact
match over the first 110 training steps.
These trajectories complement the final-checkpoint scores in
Table~\ref{tab:ablation}.

\section{Case Study}
\label{app:case-study}
We use LiveProBench as a concrete case study of streaming response decisions. The
benchmark covers six complementary behaviors:
\begin{itemize}
\setlength{\itemsep}{0pt}
\setlength{\parskip}{0pt}
\item \textbf{Event Sensitivity (ES)} requires a response when the first relevant
visual evidence appears and silence throughout annotated negative segments.
\item \textbf{Onset Response (OR)} anchors the target to the annotated event onset
and allows a 3-second grace period after that onset.
\item \textbf{Window Response (WR)} accepts the first response anywhere in a
broader annotated interval, allowing temporal ambiguity around the event.
\item \textbf{Anomaly Alert (AA)} tests semantically ambiguous events whose alert
condition is less crisply specified than a single explicit action boundary.
\item \textbf{Sustained Silence (SS)} places the target after an especially long
annotated silence window and requires no response throughout that prefix.
\item \textbf{Duplicate Counting (DC)} presents repeated occurrences of one action
and scores only the final total count; exact response timing at individual
completions is not evaluated.
\end{itemize}
Figure~\ref{fig:liveprobench-cases} visualizes three representative patterns using
annotation-only examples: ES, a first-response window represented by WR, and
final-total counting in DC. OR, AA, and SS share the same first-response-window
layout as WR and are not shown separately; they differ mainly in query type and in
window length, which departs substantially from WR's broad interval. OR anchors a
short grace window to an explicit onset, AA queries a semantically ambiguous
anomaly, and SS follows an especially long silence prefix.

\begin{figure*}[h]
\centering
\includegraphics[width=\textwidth]{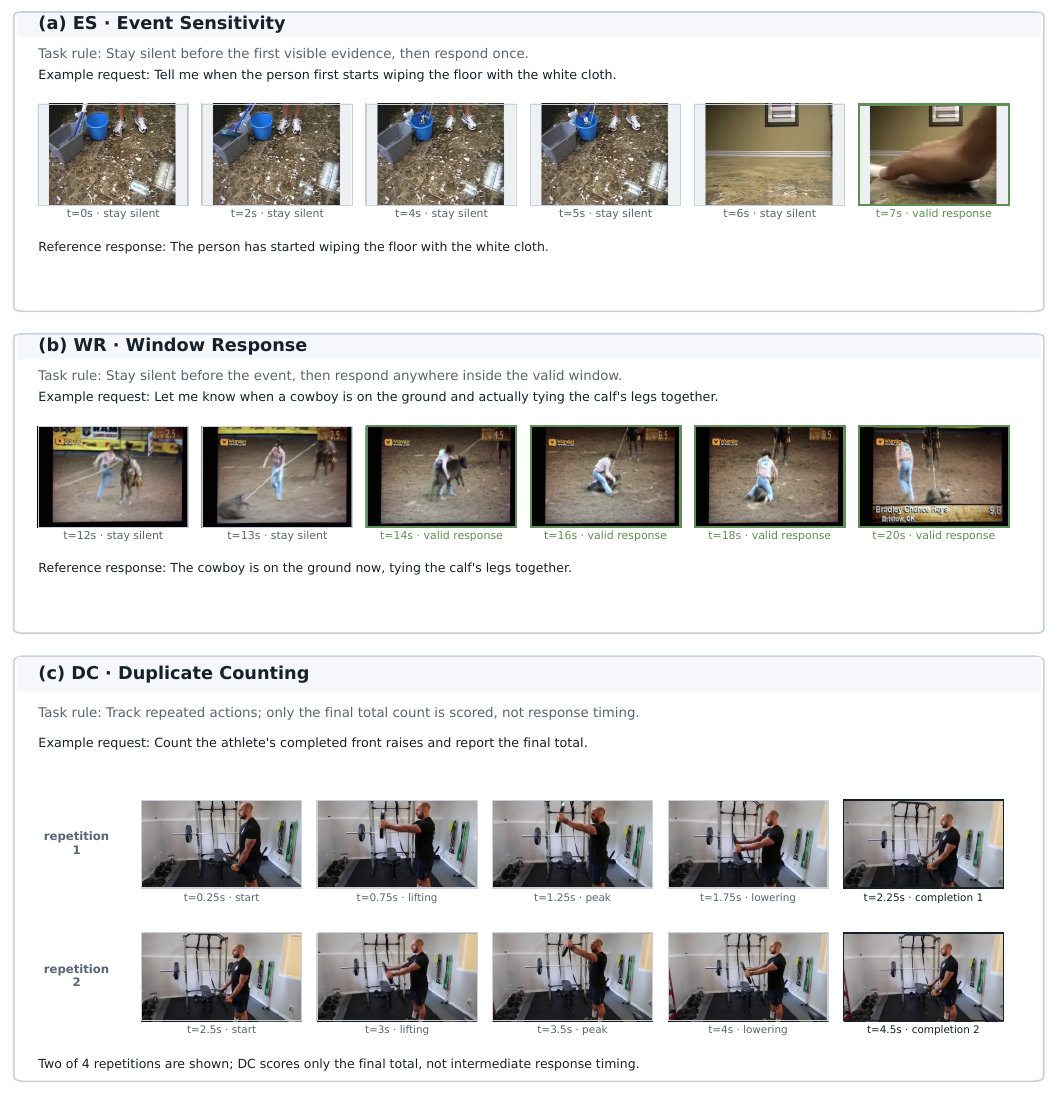}
\caption{Representative LiveProBench task formats from the released annotations.
ES illustrates response--silence balance; WR represents the first-response-window
format shared by OR, WR, AA, and SS; and DC illustrates final-total counting across
repeated events. ES and WR each show six complete video frames; DC uses ten
frames spanning two complete action cycles. Frame labels identify silence,
valid response moments, and action phases; green borders mark valid ES/WR responses.
No model predictions are shown.}
\label{fig:liveprobench-cases}
\end{figure*}

Figure~\ref{fig:anomaly-cases} compares the training methods on individual cases
from the same released annotations. Each case shows the sampled frames, the
annotated valid response window, and the first response emitted by Live-OPSD and
the streaming-SFT baselines, using the audio-disabled predictions behind
Table~\ref{tab:liveprobench}. Live-OPSD responds inside the window in every case,
whereas the SFT baselines either stay silent or respond prematurely.

\begin{figure*}[h]
\centering
\includegraphics[width=\textwidth]{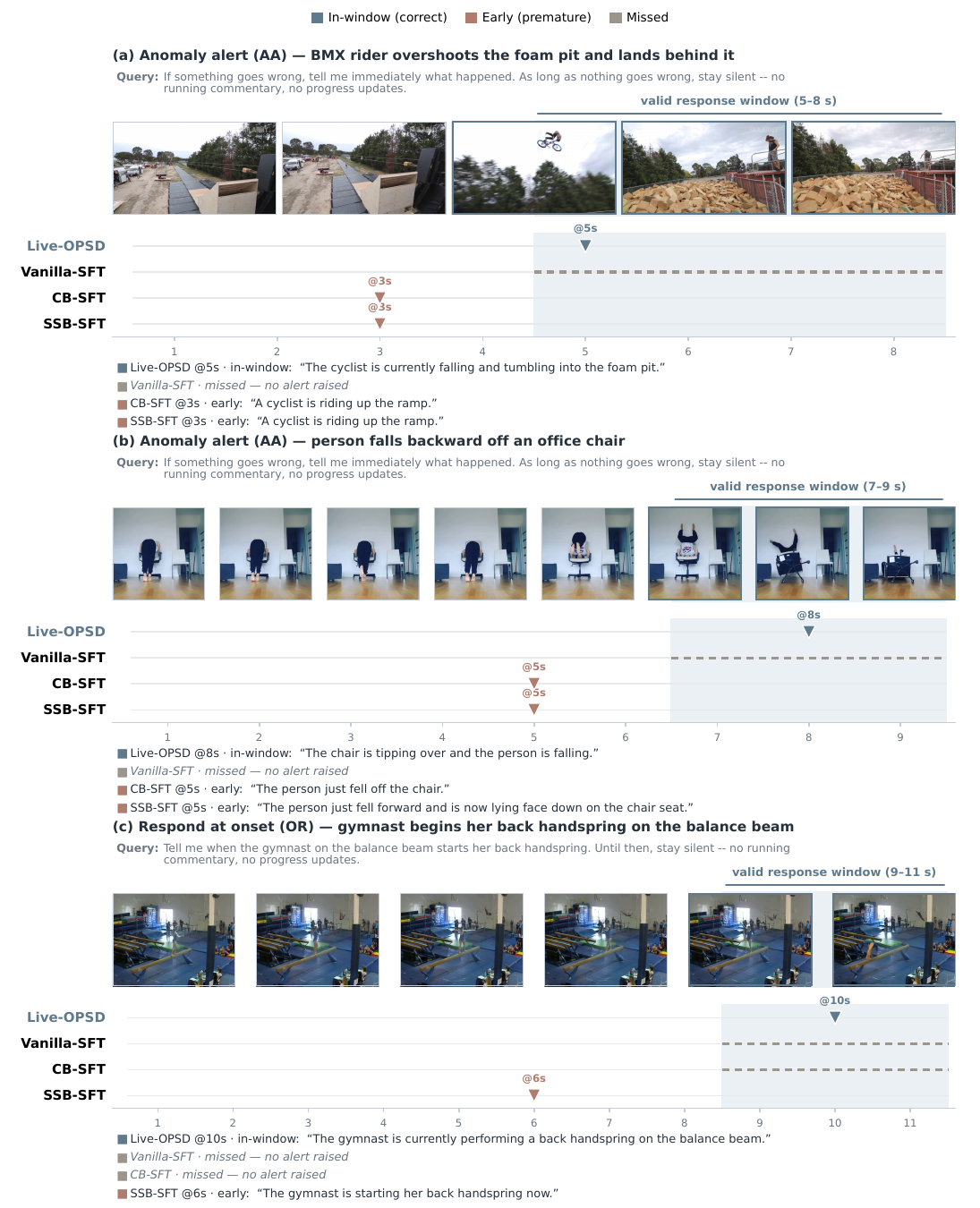}
\caption{Case comparison of Live-OPSD, Vanilla-SFT, CB-SFT, and SSB-SFT on
released LiveProBench streams. Each panel shows the sampled video frames, the
annotated valid response window, and each method's first response, colored by
verdict: in-window (correct), early (premature), or missed.}
\label{fig:anomaly-cases}
\end{figure*}

\clearpage
\section{Decision-level analysis of Live-OPSD}
\label{app:theory}

Live-OPSD uses privileged context to construct a teaching signal, on-policy
distillation to transfer that signal, and DAS to choose where to apply it.
The analysis asks when these operations can support better response decisions.
Let $(S,H,B)\sim\rho$ denote a recorded streaming context, its training-time
hint, and its supervisory decision, with $B\in\{0,1\}$. The student observes
$S$, whereas the teacher also observes $H$. All decision risks below measure
disagreement with $B$. We first characterize the information supplied by a hint,
then relate the actual teacher--student discrepancy to decision risk, and finally
identify when disagreement-based selection enriches contexts with a teaching
advantage.

\subsection{Privileged context and decision ambiguity}
\label{app:information}

A shared parameter set need not imply identical decision information. The first
result characterizes the ambiguity that can be resolved by the teacher's
additional context. Write
$u(s)=\Pr(B=1\mid S=s)$ and
$v(s,h)=\Pr(B=1\mid S=s,H=h)$ for the two conditional probabilities of a
supervised response.

\begin{proposition}[Information advantage of privileged context]
\label{prop:privileged-information}
Let
\[
  R_{\mathrm{vis}}^*=\inf_g\Pr(g(S)\ne B),
  \qquad
  R_{\mathrm{priv}}^*=\inf_f\Pr(f(S,H)\ne B),
\]
where the infima range over all measurable binary decision rules. Then
\[
  R_{\mathrm{vis}}^*-R_{\mathrm{priv}}^*
  =\mathbb{E}\!\left[
       \left|v(S,H)-\tfrac12\right|
       -\left|u(S)-\tfrac12\right|\right]\ge0.
\]
The inequality is strict if, for a set of contexts $s$ of positive probability,
$v(s,H)$ lies on both sides of $1/2$ with positive conditional probability.
\end{proposition}

\begin{proof}
For a Bernoulli supervisory decision with response probability $u$, choosing
silence incurs disagreement probability $u$, and choosing a response incurs
$1-u$. The minimum is $\min\{u,1-u\}=1/2-|u-1/2|$. Applying this calculation
under the two information sets gives
\[
  R_{\mathrm{vis}}^*=\tfrac12-\mathbb{E}|u(S)-\tfrac12|,
  \qquad
  R_{\mathrm{priv}}^*=\tfrac12-\mathbb{E}|v(S,H)-\tfrac12|.
\]
The tower property gives $u(S)=\mathbb{E}[v(S,H)\mid S]$. Conditional Jensen's
inequality therefore implies
\[
  |u(S)-\tfrac12|
    \le\mathbb{E}[|v(S,H)-\tfrac12|\mid S].
\]
Subtracting the two minimum risks proves the identity and its nonnegativity.
The absolute-value inequality is strict when its argument has both positive and
negative values with positive conditional probability, yielding the stated
strictness condition.
\end{proof}

The strictness condition identifies a source of teaching information: the hint
distinguishes supervisory decisions that the visible context alone leaves
ambiguous. When $H$ is a deterministic function of $S$, the two optimal risks
are equal. Such a hint can still make an evidence--decision relationship
explicit for a finite model, but that benefit concerns how the model uses
information rather than an increase in the information available. Thus, the
hint's information advantage and the current teacher's decision advantage are
different quantities. The transfer analysis uses the latter.

\subsection{From on-policy distillation to response decisions}
\label{app:decision-transfer}

Fix a teacher and student at a training iterate. Let $P(\cdot\mid s,h)$ and
$Q(\cdot\mid s)$ be their autoregressive distributions over continuations of at
most $L$ tokens. Ended sequences are padded with an absorbing end symbol to give
a common horizon; both conditionals put all mass on that symbol once a turn has
ended, so padded positions contribute no discrepancy. The binary response
decision $d(Y)$ is determined by the complete continuation, as in
Section~\ref{sec:formulation}. Define
\[
  R(Q)=\mathbb{E}_\rho\Pr_{Y\sim Q(\cdot\mid S)}[d(Y)\ne B],
  \qquad
  R(P)=\mathbb{E}_\rho\Pr_{Y\sim P(\cdot\mid S,H)}[d(Y)\ne B].
\]
Although training only scores the teacher, its conditional probabilities define
such a continuation distribution. This lets us ask whether matching those
probabilities along student-generated prefixes also controls response decisions.

At a fixed context, let $p_\ell(\cdot\mid y_{<\ell})$ and
$q_\ell(\cdot\mid y_{<\ell})$ denote the teacher and student next-token
distributions. Define the cumulative on-policy reverse KL
\[
  \mathcal{K}_{\mathrm r}(P,Q)=\mathbb{E}_\rho\mathbb{E}_{Y\sim Q}
    \sum_{\ell=1}^{L}
      \KL\!\left(q_\ell(\cdot\mid Y_{<\ell})
             \Vert p_\ell(\cdot\mid Y_{<\ell})\right).
\]
The context dependence is implicit in $p_\ell$ and $q_\ell$. Logarithms are
natural, and the continuation expectation follows the student, as in the
on-policy objective. Write
$\operatorname{TV}(U,V)=\tfrac12\|U-V\|_1$ for total variation.

\begin{proposition}[Decision transfer along student-generated prefixes]
\label{prop:decision-transfer}
For the finite-horizon continuation distributions above,
\[
  R(Q)\le R(P)+\sqrt{\frac{L\,\mathcal{K}_{\mathrm r}(P,Q)}{2}}.
\]
\end{proposition}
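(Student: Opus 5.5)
We plan to prove the bound in three moves: pass from decision risk to total variation between continuation distributions, bound total variation by Pinsker, and decompose the sequence-level KL by the chain rule into the cumulative on-policy reverse KL.

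\textbf{Step 1 (risk to total variation).} Fix a context $(s,h,b)$. The loss $\mathbb{1}[d(y)\ne b]$ is a $[0,1]$-valued function of the complete padded continuation $y$. For any such function $f$ and distributions $U,V$ on the finite set of horizon-$L$ sequences, $|\mathbb{E}_U f-\mathbb{E}_V f|\le\operatorname{TV}(U,V)$. Applying this pointwise gives
\[
\Pr_{Y\sim Q(\cdot\mid s)}[d(Y)\ne b]\le\Pr_{Y\sim P(\cdot\mid s,h)}[d(Y)\ne b]+\operatorname{TV}\bigl(Q(\cdot\mid s),P(\cdot\mid s,h)\bigr).
\]
Taking $\mathbb{E}_\rho$ then yields $R(Q)\le R(P)+\mathbb{E}_\rho\operatorname{TV}$.

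\textbf{Step 2 (Pinsker and Jensen).} Pinsker's inequality with natural logarithms gives, per context, $\operatorname{TV}(Q,P)\le\sqrt{\KL(Q\Vert P)/2}$. Concavity of the square root then gives $\mathbb{E}_\rho\operatorname{TV}\le\sqrt{\mathbb{E}_\rho\KL(Q\Vert P)/2}$. If $\KL(Q\Vert P)=\infty$ on a set of positive probability, the right-hand side is infinite and the bound is trivial, so we may assume finiteness.

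\textbf{Step 3 (chain rule).} The sequence-level reverse KL decomposes autoregressively as
\[
\KL(Q\Vert P)=\mathbb{E}_{Y\sim Q}\sum_{\ell=1}^{L}\KL\bigl(q_\ell(\cdot\mid Y_{<\ell})\Vert p_\ell(\cdot\mid Y_{<\ell})\bigr),
\]
with the expectation taken over student-generated prefixes. This is exactly why reverse KL, rather than forward KL, matches the on-policy objective. Padded positions contribute zero because both conditionals put all mass on the absorbing end symbol. Taking $\mathbb{E}_\rho$ gives $\mathbb{E}_\rho\KL(Q\Vert P)=\mathcal{K}_{\mathrm r}(P,Q)$, and hence
\[
R(Q)\le R(P)+\sqrt{\mathcal{K}_{\mathrm r}(P,Q)/2}.
\]

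\textbf{Reaching the stated form.} The bound above is already at least as strong as the stated one, since $L\ge1$ and therefore $\sqrt{\mathcal{K}_{\mathrm r}/2}\le\sqrt{L\,\mathcal{K}_{\mathrm r}/2}$. We expect the main obstacle to be justifying Step 3 cleanly: checking the absolute-continuity conventions and confirming that the padding makes the horizon-$L$ chain rule exact. The factor $L$ in the statement suggests the authors took a per-token route instead. In that route one bounds sequence TV by the sum of expected per-position TVs (a hybrid or coupling argument), applies Pinsker at each position, and finishes with Cauchy--Schwarz, $\sum_\ell\sqrt{a_\ell}\le\sqrt{L\sum_\ell a_\ell}$, which produces exactly the $\sqrt{L}$ factor. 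We will present the chain-rule argument as the main proof and note that it implies the stated inequality.
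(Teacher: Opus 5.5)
Your proof is correct, and it takes a different and sharper route than the paper.

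\textbf{The paper's route.} The paper never forms the sequence-level divergence. It interpolates between $P$ and $Q$ with hybrid distributions $M_j$: student conditionals at the first $j$ positions, teacher conditionals afterwards. It telescopes to $\operatorname{TV}(P,Q)\le\sum_{j}\mathbb{E}_{Y_{<j}\sim Q}\operatorname{TV}(p_j,q_j)$ and applies Pinsker position by position. It then pays the $\sqrt{L}$ in the Cauchy--Schwarz step $\sum_j\sqrt{a_j}\le\sqrt{L\sum_j a_j}$. This is exactly the per-token route you guessed.

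\textbf{What your route buys.} The chain rule for $\KL(Q\Vert P)$ is exact along student prefixes, so one application of Pinsker gives $R(Q)\le R(P)+\sqrt{\mathcal{K}_{\mathrm r}(P,Q)/2}$. This drops the explicit $\sqrt{L}$ factor. Downstream, it relaxes the sufficient condition of Corollary~\ref{cor:decision-improvement} to $\mathcal{K}_{\mathrm r}<2\Delta^2$.

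\textbf{What the hybrid construction buys.} Its per-position TV terms are what the paper reuses later. They appear with weights $\mathbb{1}[\nu\ge j]$ in the resolving-prefix refinement. More essentially, Proposition~\ref{prop:truncation-gap} bounds each $\operatorname{TV}(p_j,q_j)$ by a retained-support KL plus discarded masses. That additive correction sits at the TV level, and a sequence-level chain rule does not absorb it. Your approach does extend to the resolving prefix: apply Pinsker and the chain rule to the law of the stopped prefix $Y_{1:\nu(Y)}$, of which $d(Y)$ is a function. That again avoids the $\bar\nu$ factor. It does not extend directly to the truncated-support bound.
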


\begin{proof}
Fix $(s,h,b)$ and introduce hybrid continuation distributions
\[
  M_j(y_{1:L})=
    \prod_{\ell=1}^{j}q_\ell(y_\ell\mid y_{<\ell})
    \prod_{\ell=j+1}^{L}p_\ell(y_\ell\mid y_{<\ell}),
  \qquad j=0,\ldots,L.
\]
Thus $M_0=P$ and $M_L=Q$. The distributions $M_{j-1}$ and $M_j$ share a
student-generated prefix of length $j-1$ and the same teacher continuation after
token $j$. Summing over this common suffix gives
\[
  \operatorname{TV}(M_{j-1},M_j)
   =\mathbb{E}_{Y_{<j}\sim Q}
       \operatorname{TV}\!\left(p_j(\cdot\mid Y_{<j}),
                                q_j(\cdot\mid Y_{<j})\right).
\]
The triangle inequality and Pinsker's inequality yield
\[
  \operatorname{TV}(P,Q)
    \le\mathbb{E}_{Y\sim Q}\sum_{j=1}^{L}
       \sqrt{\frac12\KL\!\left(q_j(\cdot\mid Y_{<j})
                         \Vert p_j(\cdot\mid Y_{<j})\right)}.
\]
For the event $\{Y:d(Y)\ne b\}$, the difference between its student and teacher
probabilities is at most $\operatorname{TV}(P,Q)$. Averaging over contexts and
applying Cauchy--Schwarz to the $L$ prefix contributions proves the result.
\end{proof}

The prefix distribution in this result is the student's own. The discrepancy is
therefore evaluated where the student makes the token choices that determine its
action. Matching only reference continuations controls a different expectation
and requires an additional coverage condition. The result concerns continuations
at recorded contexts under $\rho$, not full model-generated interaction histories.

The bound charges one discrepancy term per response position. Response decisions
are usually settled before a turn ends. Let
\[
  \nu(Y)=\min\bigl\{j\le L:\ d\text{ is constant on all continuations agreeing}
    \text{ with }Y\text{ in the first }j\text{ tokens}\bigr\}
\]
be the position at which the decision is resolved, a stopping time for the token
filtration, and write
\[
  \bar\nu=\mathbb{E}_\rho\mathbb{E}_{Y\sim Q}\,\nu(Y),
  \qquad
  \mathcal{K}_{\mathrm r,\nu}(P,Q)=
    \mathbb{E}_\rho\mathbb{E}_{Y\sim Q}
    \sum_{j=1}^{\nu(Y)}
      \KL\!\left(q_j(\cdot\mid Y_{<j})
             \Vert p_j(\cdot\mid Y_{<j})\right).
\]

\begin{proposition}[Decision transfer over the resolving prefix]
\label{prop:resolution-horizon}
Under the assumptions of Proposition~\ref{prop:decision-transfer},
\[
  R(Q)\le R(P)+
    \sqrt{\frac{\bar\nu\,\mathcal{K}_{\mathrm r,\nu}(P,Q)}{2}}.
\]
Since $\nu\le L$, this implies Proposition~\ref{prop:decision-transfer}.
\end{proposition}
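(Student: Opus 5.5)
I will reuse the hybrid construction from the proof of Proposition~\ref{prop:decision-transfer}, but stop the hybrids at the resolving time $\nu$ rather than at the fixed horizon $L$. Fix $(s,h,b)$ and write $E=\{y:d(y)\ne b\}$. The key observation is that $E$ is determined by the first $\nu(Y)$ tokens. For any continuation $y$, whether $y\in E$ depends only on $y_{1:\nu(y)}$, and $\{\nu=j\}$ is determined by $y_{1:j}$ because $\nu$ is a stopping time.

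Define stopped hybrids $\widetilde M_j$ as follows. Generate tokens from $q$ while $\ell\le j$ and $\ell\le\nu$. After that point, generate from $p$, or from anything, since the decision is already fixed. The event $E$ then has the same probability under $\widetilde M_0$ as under $P$. This holds because, once the decision is resolved, $\mathbf 1_E$ is constant, so the choice of later sampler is irrelevant. Likewise, $E$ has the same probability under $\widetilde M_L$ as under $Q$.

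Consecutive hybrids $\widetilde M_{j-1}$ and $\widetilde M_j$ differ only at step $j$, and only on the event $\{\nu\ge j\}$. That event is measurable with respect to $Y_{<j}$, since $\{\nu\ge j\}=\{\nu\le j-1\}^c$. Coupling the common prefix and the common suffix, as in the earlier proof, gives
\[
|\widetilde M_{j-1}(E)-\widetilde M_j(E)|\le \mathbb{E}_{Y_{<j}\sim Q}\bigl[\mathbf 1\{\nu(Y)\ge j\}\operatorname{TV}(p_j,q_j)\bigr].
\]
Telescoping and applying Pinsker's inequality then yield
\[
|Q(E)-P(E)|\le \mathbb{E}_{Y\sim Q}\sum_{j=1}^{\nu(Y)}\sqrt{\tfrac12\KL(q_j\Vert p_j)}.
\]

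The only delicate point is justifying that the prefix $Y_{<j}$ in the $j$-th term is distributed under $Q$. This holds because under both hybrids the first $j-1$ tokens are drawn from $q$ on $\{\nu\ge j\}$. The stopping-time property is exactly what makes the indicator a function of that prefix. I expect this bookkeeping to be the main obstacle, and I would write it out with the explicit product formula restricted to $\{\nu\ge j\}$.

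The last step averages over $\rho$ and applies Cauchy--Schwarz with a random number of terms:
\[
\mathbb{E}\sum_{j\le\nu}\sqrt{a_j}\le\mathbb{E}\bigl[\sqrt{\nu}\,\sqrt{\textstyle\sum_{j\le\nu}a_j}\bigr]\le\sqrt{\mathbb{E}\nu}\,\sqrt{\mathbb{E}\textstyle\sum_{j\le\nu}a_j}.
\]
Here the expectation runs jointly over $\rho$ and $Q$, and the inequality uses Cauchy--Schwarz first pathwise and then in $L^2$. This gives
\[
R(Q)-R(P)\le\sqrt{\bar\nu\,\mathcal K_{\mathrm r,\nu}(P,Q)/2}.
\]

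Finally, $\nu\le L$ gives $\bar\nu\le L$. Since KL terms are nonnegative, $\mathcal K_{\mathrm r,\nu}\le\mathcal K_{\mathrm r}$, which recovers Proposition~\ref{prop:decision-transfer}.
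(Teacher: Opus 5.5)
Your proposal is correct and follows essentially the same route as the paper. Both use a telescoping hybrid argument in which only steps on $\{\nu\ge j\}$ (measurable in $Y_{<j}$ by the stopping-time property) contribute a $\operatorname{TV}(p_j,q_j)$ term, followed by Pinsker and a Cauchy--Schwarz step weighted by $\mathbb{1}[\nu\ge j]$. The one difference is cosmetic: you stop the hybrids at $\nu$ so that consecutive ones coincide on $\{\nu\le j-1\}$, whereas the paper keeps the unstopped $M_j$ and notes that $\sum_{y_j}(p_j-q_j)\Phi$ vanishes there; since $E$ depends only on the stopped path, $\widetilde M_j(E)=M_j(E)$ and the two versions are equivalent.
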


\begin{proof}
Fix $(s,h,b)$ and let $A=\{y:d(y)\ne b\}$. By construction $d$ is a function of
$y_{1:\nu(y)}$, and the event $\{\nu\ge j\}$ is determined by $y_{<j}$. With the
hybrid distributions of the previous proof,
\[
  M_{j-1}(A)-M_j(A)
   =\mathbb{E}_{Y_{<j}\sim Q}
      \sum_{y_j}\bigl[p_j(y_j\mid Y_{<j})-q_j(y_j\mid Y_{<j})\bigr]
      \Phi(Y_{<j},y_j),
\]
where $\Phi(y_{\le j})$ is the probability of $A$ under the teacher's
continuation from $y_{\le j}$. If $\nu\le j-1$, the prefix already determines
$d$, so the inner sum vanishes. If $\nu\ge j$, its magnitude is bounded by
$\operatorname{TV}(p_j,q_j)$. Telescoping, applying Pinsker's inequality, and
using Cauchy--Schwarz with the weights $\mathbb{1}[\nu\ge j]$ proves the result.
\end{proof}

Under the explicit silence protocol, a leading token other than the silence
marker resolves the decision at the first position. The silent branch is
resolved once the turn ends without accompanying text, following the scoring rule
of Appendix~\ref{app:metrics}. The resolving prefix is therefore short compared
with a full response, and $\bar\nu$ is measurable from the current policy.
Response tokens beyond $\nu$ still receive supervision in
Equation~\ref{eq:loss}; they affect response content rather than the binary
response decision. Both corollaries below hold with
$(L,\mathcal{K}_{\mathrm r})$ replaced by
$(\bar\nu,\mathcal{K}_{\mathrm r,\nu})$.

\begin{corollary}[Transfer of a teacher's decision advantage]
\label{cor:decision-improvement}
Let $Q_0$ be a reference student and suppose the fixed teacher has decision
advantage $\Delta=R(Q_0)-R(P)>0$ under the same context distribution. For a
candidate student $Q$,
\[
  R(Q)-R(Q_0)\le-\Delta+
    \sqrt{\frac{L\,\mathcal{K}_{\mathrm r}(P,Q)}{2}}.
\]
In particular, $\mathcal{K}_{\mathrm r}(P,Q)<2\Delta^2/L$ is sufficient for
$R(Q)<R(Q_0)$.
\end{corollary}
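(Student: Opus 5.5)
The corollary follows directly from Proposition~\ref{prop:decision-transfer}; the work is bookkeeping. Apply that proposition to the fixed teacher $P$ and the candidate student $Q$, under the context distribution $\rho$ used to define $\Delta$. This gives
\[
  R(Q)\le R(P)+\sqrt{\frac{L\,\mathcal{K}_{\mathrm r}(P,Q)}{2}}.
\]
Next, use the definition of the decision advantage to write $R(P)=R(Q_0)-\Delta$. Substituting and subtracting $R(Q_0)$ from both sides yields the displayed inequality:
\[
  R(Q)-R(Q_0)\le-\Delta+\sqrt{L\,\mathcal{K}_{\mathrm r}(P,Q)/2}.
\]

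For the sufficient condition, suppose $\mathcal{K}_{\mathrm r}(P,Q)<2\Delta^2/L$. Multiplying by $L/2$ gives $L\,\mathcal{K}_{\mathrm r}(P,Q)/2<\Delta^2$. Because $\Delta>0$ and the square root is strictly increasing on $[0,\infty)$, this implies $\sqrt{L\,\mathcal{K}_{\mathrm r}(P,Q)/2}<\Delta$. The right-hand side of the main inequality is therefore strictly negative, so $R(Q)<R(Q_0)$.

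The only point needing care is consistency of the setup. The same teacher $P$, with the same hint-conditioned distribution, must enter both $\Delta$ and the transfer bound. Both risks must also be computed under the same $\rho$, and $\mathcal{K}_{\mathrm r}(P,Q)$ must be evaluated on prefixes generated by the candidate $Q$, not by $Q_0$. The statement fixes all three, so no further argument is required.

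The $(\bar\nu,\mathcal{K}_{\mathrm r,\nu})$ variant mentioned before the corollary follows by the same two substitutions, starting from Proposition~\ref{prop:resolution-horizon} instead of Proposition~\ref{prop:decision-transfer}. I expect no real obstacle beyond checking that the strict inequality survives the square-root step, which uses $\Delta>0$.
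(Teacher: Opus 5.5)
Your proof is correct and takes the same route as the paper: apply Proposition~\ref{prop:decision-transfer} to $(P,Q)$, substitute $R(P)=R(Q_0)-\Delta$, and observe that the right-hand side is negative once $\sqrt{L\,\mathcal{K}_{\mathrm r}(P,Q)/2}<\Delta$. Your explicit square-root step relies on $\Delta>0$ and on $\mathcal{K}_{\mathrm r}\ge 0$, which holds because KL is nonnegative; this only spells out what the paper's one-line proof leaves implicit.
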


\begin{proof}
Subtract $R(Q_0)=R(P)+\Delta$ from
Proposition~\ref{prop:decision-transfer}. The right-hand side is negative under
the stated condition.
\end{proof}

The sufficient condition compares the teacher's decision advantage with the
residual transfer discrepancy. In Live-OPSD, $Q_0$ generates the current batch
and $P$ is its hinted counterpart. Equation~\ref{eq:loss} evaluates candidate
updates on fixed selected samples, whereas $\mathcal{K}_{\mathrm r}(P,Q)$ averages
full-vocabulary discrepancies over fresh continuations from $Q$. Connecting the
two requires the selected samples and retained support to cover the relevant
contexts, prefixes, and vocabulary. The corollary is a distribution-level
sufficient condition, not a guarantee for every optimization step.

\begin{corollary}[Residual discrepancy from privileged information]
\label{cor:privileged-residual}
For any student whose continuation depends only on $S$,
\[
  \mathcal{K}_{\mathrm r}(P,Q)\ge
    \frac{2\bigl[R_{\mathrm{vis}}^*-R(P)\bigr]_+^2}{L},
  \qquad [x]_+=\max\{x,0\}.
\]
\end{corollary}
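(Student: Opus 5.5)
The plan is to combine Proposition~\ref{prop:decision-transfer} with a lower bound on the student's decision risk coming from Proposition~\ref{prop:privileged-information}. The key observation is that a student whose continuation distribution depends only on $S$ induces a (possibly randomized) decision rule that is a function of $S$ alone. Hence its risk cannot beat the optimal visible-context risk: $R(Q)\ge R_{\mathrm{vis}}^*$. Concretely, write $\pi(s)=\Pr_{Y\sim Q(\cdot\mid s)}[d(Y)=1]$. Conditioning on $S=s$, the student's disagreement probability is $\pi(s)(1-u(s))+(1-\pi(s))u(s)$. This is an affine function of $\pi(s)\in[0,1]$, so it is minimized at an endpoint and is at least $\min\{u(s),1-u(s)\}$. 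Taking expectations over $S$ and using the identity from the proof of Proposition~\ref{prop:privileged-information} gives $R(Q)\ge \tfrac12-\mathbb{E}|u(S)-\tfrac12|=R_{\mathrm{vis}}^*$.

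The one point needing care is that $R(Q)$ averages over $(S,H,B)\sim\rho$. The student's randomization is independent of $(H,B)$ given $S$, because $Q(\cdot\mid S)$ does not see $H$ or $B$. So the conditional disagreement at $S=s$ indeed involves only $u(s)=\Pr(B=1\mid S=s)$. I will make this conditional-independence step explicit, since it is exactly where the hypothesis ``depends only on $S$'' is used. Measurability of $s\mapsto\pi(s)$ follows from $Q$ being a Markov kernel.

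Next I combine this with Proposition~\ref{prop:decision-transfer}:
\[
  R_{\mathrm{vis}}^*\le R(Q)\le R(P)+\sqrt{\frac{L\,\mathcal{K}_{\mathrm r}(P,Q)}{2}},
\]
so $\sqrt{L\mathcal{K}_{\mathrm r}/2}\ge R_{\mathrm{vis}}^*-R(P)$. If the right-hand side is nonpositive, the claim holds trivially because $\mathcal{K}_{\mathrm r}\ge0$, which is why the positive part appears. Otherwise, squaring both nonnegative sides and rearranging gives $\mathcal{K}_{\mathrm r}(P,Q)\ge 2[R_{\mathrm{vis}}^*-R(P)]^2/L$.

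The same argument applies verbatim with Proposition~\ref{prop:resolution-horizon}, replacing $(L,\mathcal{K}_{\mathrm r})$ by $(\bar\nu,\mathcal{K}_{\mathrm r,\nu})$, as the text announces. The main (mild) obstacle is the lower bound $R(Q)\ge R_{\mathrm{vis}}^*$ for a randomized student, since $R_{\mathrm{vis}}^*$ was defined as an infimum over deterministic rules. The endpoint argument for the affine function resolves this.
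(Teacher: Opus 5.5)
Your proposal is correct and follows the paper's proof. The paper also argues that randomizing the response decision cannot beat the best deterministic rule, so $R(Q)\ge R_{\mathrm{vis}}^*$, combines this with Proposition~\ref{prop:decision-transfer}, rearranges when $R_{\mathrm{vis}}^*>R(P)$, and otherwise uses nonnegativity of KL. Your affine endpoint argument and the explicit conditional-independence step just spell out the one line the paper leaves terse.
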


\begin{proof}
Randomizing the response decision cannot reduce conditional disagreement below
the best deterministic decision, so $R(Q)\ge R_{\mathrm{vis}}^*$.
Combine this with Proposition~\ref{prop:decision-transfer} and rearrange when
$R_{\mathrm{vis}}^*>R(P)$. Otherwise the claim follows from nonnegativity of KL.
\end{proof}

When the hint reveals decision-relevant information unavailable in the visible
context, exact teacher--student agreement can be unattainable. The objective is
to improve decisions from the student's information set; zero distillation loss
is not required for useful transfer.

The discrepancies above use the full vocabulary, whereas the implementation
replaces each divergence in Equation~\ref{eq:loss} with its retained-support
counterpart from Equation~\ref{eq:support}. The remaining result bounds this
difference by the discarded probability masses.

\begin{proposition}[From the truncated objective to full-vocabulary transfer]
\label{prop:truncation-gap}
At each position let $m_j^P$ and $m_j^Q$ be the teacher and student masses
retained by the support $\mathcal{K}_j$ of Equation~\ref{eq:support}, and let
$\widetilde p_j,\widetilde q_j$ be the renormalized distributions. Then
\[
  \operatorname{TV}(p_j,q_j)
    \le\sqrt{\frac12\KL(\widetilde q_j\Vert\widetilde p_j)}
       +(1-m_j^P)+(1-m_j^Q),
\]
and consequently
\[
  R(Q)\le R(P)
    +\sqrt{\frac{\bar\nu\,\widetilde{\mathcal{K}}_{\mathrm r,\nu}}{2}}
    +\mathbb{E}_\rho\mathbb{E}_{Y\sim Q}\sum_{j=1}^{\nu(Y)}
       \bigl[(1-m_j^P)+(1-m_j^Q)\bigr],
\]
where $\widetilde{\mathcal{K}}_{\mathrm r,\nu}$ uses reverse KL between the
renormalized distributions on the retained support.
\end{proposition}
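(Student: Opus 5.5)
The per-position inequality comes from a triangle inequality through the renormalized distributions, followed by Pinsker. Extend $\widetilde p_j$ and $\widetilde q_j$ by zero outside $\mathcal{K}_j$ so that all four distributions live on the full vocabulary. Then
\[
  \operatorname{TV}(p_j,q_j)\le\operatorname{TV}(p_j,\widetilde p_j)
  +\operatorname{TV}(\widetilde p_j,\widetilde q_j)
  +\operatorname{TV}(\widetilde q_j,q_j).
\]
For a distribution $p$ and its renormalized restriction $\widetilde p=p\,\mathbb{1}_{\mathcal K}/m$ with $m=p(\mathcal K)$, the $L^1$ distance has two parts. On $\mathcal K$ it contributes $\sum_{v\in\mathcal K}p(v)(1/m-1)=1-m$. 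Off $\mathcal K$ it contributes $1-m$ as well. So $\operatorname{TV}(p,\widetilde p)=1-m$ exactly. The middle term is bounded by Pinsker, $\operatorname{TV}(\widetilde p_j,\widetilde q_j)\le\sqrt{\tfrac12\KL(\widetilde q_j\Vert\widetilde p_j)}$. This orientation is valid because Pinsker holds in either direction, and $\widetilde q_j\ll\widetilde p_j$ on the support, which consists of the teacher's top-$k$ tokens with positive teacher mass. These three bounds together give the first display.

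For the risk bound I would reuse the telescoping argument from the proof of Proposition~\ref{prop:resolution-horizon}. That argument gives
\[
  R(Q)-R(P)\le\mathbb{E}_\rho\mathbb{E}_{Y\sim Q}\sum_{j=1}^{\nu(Y)}\operatorname{TV}\bigl(p_j(\cdot\mid Y_{<j}),q_j(\cdot\mid Y_{<j})\bigr),
\]
where the indicator $\mathbb{1}[\nu\ge j]$ is determined by $Y_{<j}$, so the expectation can be written with the sum running to $\nu(Y)$. This is the bound before Pinsker and Cauchy--Schwarz are applied, and it is the form I would start from rather than the final statement of that proposition. I would then substitute the per-position inequality term by term. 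The mass-loss terms pass through linearly and produce the last expectation in the claim.

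The square-root terms are handled by Cauchy--Schwarz over the joint index $(\rho,Y,j)$ with weights $\mathbb{1}[\nu\ge j]$:
\[
  \mathbb{E}\sum_{j\le\nu}\sqrt{\tfrac12\KL(\widetilde q_j\Vert\widetilde p_j)}
  \le\sqrt{\mathbb{E}\,\nu}\cdot\sqrt{\tfrac12\,\mathbb{E}\sum_{j\le\nu}\KL(\widetilde q_j\Vert\widetilde p_j)}
  =\sqrt{\frac{\bar\nu\,\widetilde{\mathcal K}_{\mathrm r,\nu}}{2}}.
\]
The main obstacle is bookkeeping rather than analysis. The support $\mathcal K_j$, the masses $m_j^P$ and $m_j^Q$, and the renormalized distributions all depend on the context and on the prefix $Y_{<j}$. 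So the earlier result must be invoked in its pre-Pinsker form, with total variation summed along student prefixes, and not in its final square-root form.
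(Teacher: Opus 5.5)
Your proposal is correct and follows the paper's proof. The risk bound is obtained exactly as in the paper: you substitute the per-position inequality into the pre-Pinsker telescoped bound from Proposition~\ref{prop:resolution-horizon} and apply Cauchy--Schwarz with weights $\mathbb{1}[\nu\ge j]$. The first display differs only in bookkeeping: you chain the triangle inequality through $\widetilde p_j,\widetilde q_j$ using the exact identity $\operatorname{TV}(p,\widetilde p)=1-m$, whereas the paper splits $\|p_j-q_j\|_1$ at $\mathcal{K}_j$ and uses $|m_j^P-m_j^Q|\le(1-m_j^P)+(1-m_j^Q)$; both arrive at the same bound.
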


\begin{proof}
Split $\|p_j-q_j\|_1$ at $\mathcal{K}_j$. The two tails outside the support
contribute at most $(1-m_j^P)+(1-m_j^Q)$. Inside it,
$p_j=m_j^P\widetilde p_j$ and $q_j=m_j^Q\widetilde q_j$. Summing the pointwise
triangle inequality and using
$|m_j^P-m_j^Q|\le(1-m_j^P)+(1-m_j^Q)$ gives
\[
  \operatorname{TV}(p_j,q_j)
  \le\operatorname{TV}(\widetilde p_j,\widetilde q_j)
       +(1-m_j^P)+(1-m_j^Q).
\]
Pinsker's inequality proves the first claim. Substituting it into the telescoped
bound for Proposition~\ref{prop:resolution-horizon} and applying
Cauchy--Schwarz to the reverse-KL terms proves the second.
\end{proof}

With $\mathcal D=\mathcal D_{\mathrm{rKL}}$, the implemented form of
Equation~\ref{eq:loss} minimizes the retained-support reverse-KL terms in this
bound. The discarded masses isolate the effect of truncation and are computable
from the teacher and student forward passes. The remaining difference between
the bound and the implemented update is that training uses fixed samples at
selected contexts with global token-mean normalization rather than an idealized
expectation over all contexts and continuations.

\subsection{Which contexts decision-aware selection emphasizes}
\label{app:das-analysis}

A sampled disagreement contains information about the context that produced it.
Whether that context is useful for distillation also depends on the teacher.
To separate these questions, fix the current policy $Q_\theta$ and teacher $P$,
and let $Z=(S,H,B)\sim\rho$. Define
\[
  e_\theta(z)=\Pr_{Y\sim Q_\theta(\cdot\mid s)}[d(Y)\ne b],
  \qquad
  \bar e_\theta=\mathbb{E}_{Z\sim\rho}e_\theta(Z).
\]
Here $e_\theta(z)$ is the probability of disagreement on a fresh continuation at
that context, while the indicator $\delta_t$ used by DAS is one sampled outcome.
Write $e_P(z)=\Pr_{Y\sim P(\cdot\mid s,h)}[d(Y)\ne b]$ for the teacher's
conditional disagreement and $g_\theta(z)=e_\theta(z)-e_P(z)$ for its
context-specific decision advantage. A positive $g_\theta(z)$ means that the
teacher agrees with supervision more often than the current student there. Its
mean $\mathbb{E}_\rho g_\theta(Z)$ is the decision advantage $\Delta$ of
Corollary~\ref{cor:decision-improvement} at the current policy.

\begin{proposition}[Context enrichment and teaching advantage]
\label{prop:das-enrichment}
Suppose $\bar e_\theta>0$. Draw a context from $\rho$, generate a continuation
from the current policy, and retain the context when its decision disagrees
with supervision. The conditional context distribution is
\[
  \rho_{\mathrm{inc}}(dz)
    =\frac{e_\theta(z)}{\bar e_\theta}\,\rho(dz).
\]
Its expected disagreement probability on a fresh independent continuation is
\[
  \mathbb{E}_{\rho_{\mathrm{inc}}}e_\theta(Z)
    =\bar e_\theta+
      \frac{\operatorname{Var}_\rho(e_\theta(Z))}{\bar e_\theta}
    \ge\bar e_\theta,
\]
with strict inequality whenever $e_\theta(Z)$ is nonconstant under $\rho$.
The expected teacher advantage satisfies
\[
  \mathbb{E}_{\rho_{\mathrm{inc}}}g_\theta(Z)
    =\mathbb{E}_\rho g_\theta(Z)+
      \frac{\operatorname{Cov}_\rho(e_\theta(Z),g_\theta(Z))}{\bar e_\theta}.
\]
Thus, the inconsistent pool has a greater mean teacher advantage precisely when
the covariance is positive.
\end{proposition}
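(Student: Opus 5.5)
The plan is to treat the selection step as a Bayes update on the context distribution and then read off both moment identities from a single reweighting formula. First I will compute the joint law of the drawn context $Z$ and the sampled disagreement indicator $\delta=\mathbb{1}[d(Y)\ne B]$. Conditional on $Z=z$, the continuation $Y$ is drawn from $Q_\theta(\cdot\mid s)$, so $\Pr(\delta=1\mid Z=z)=e_\theta(z)$. Integrating over $\rho$ gives $\Pr(\delta=1)=\bar e_\theta>0$, which makes the conditioning well defined. Bayes' rule then yields $\rho_{\mathrm{inc}}(dz)=e_\theta(z)\rho(dz)/\bar e_\theta$. Equivalently, for any bounded measurable $\phi$ we have $\mathbb{E}[\phi(Z)\mid\delta=1]=\mathbb{E}_\rho[e_\theta(Z)\phi(Z)]/\bar e_\theta$. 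This reweighting identity is the only real ingredient.

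Second, I will take $\phi=e_\theta$. Because the fresh continuation is independent of the one used for selection given $Z$, its disagreement probability at context $z$ is again $e_\theta(z)$. Hence
$$\mathbb{E}_{\rho_{\mathrm{inc}}}e_\theta(Z)=\frac{\mathbb{E}_\rho[e_\theta(Z)^2]}{\bar e_\theta}.$$
Writing $\mathbb{E}_\rho[e_\theta^2]=\bar e_\theta^2+\operatorname{Var}_\rho(e_\theta)$ gives the stated formula. Nonnegativity of the variance gives the inequality. The inequality is strict exactly when the variance is positive, that is, when $e_\theta(Z)$ is not $\rho$-a.s. constant; I will read "nonconstant" in that a.s. sense.

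Third, I will take $\phi=g_\theta$, which is bounded in $[-1,1]$ and therefore integrable. This gives $\mathbb{E}_\rho[e_\theta g_\theta]/\bar e_\theta$. Expanding $\mathbb{E}_\rho[e_\theta g_\theta]=\bar e_\theta\,\mathbb{E}_\rho g_\theta+\operatorname{Cov}_\rho(e_\theta,g_\theta)$ and dividing by $\bar e_\theta$ yields the identity. Since $\bar e_\theta>0$, the enriched mean exceeds $\mathbb{E}_\rho g_\theta$ if and only if the covariance is positive, which is the final sentence of the statement.

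I do not expect a real obstacle. The only point needing care is the first step. The selection indicator and the fresh continuation must be conditionally independent given $Z$, so that the weight $e_\theta(z)$ enters exactly once in the measure and once in the integrand. The teacher quantities $e_P$ and $g_\theta$ must also be treated as fixed functions of $z$, unaffected by the selection draw. Both facts hold because $Q_\theta$ and $P$ are frozen and the teacher's scoring uses no randomness tied to $\delta$.
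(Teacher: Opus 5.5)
Your proposal is correct and matches the paper's proof essentially step for step. Bayes' rule on the joint law of the context and the selection disagreement gives the reweighting identity $\mathbb{E}_{\rho_{\mathrm{inc}}}f(Z)=\mathbb{E}_\rho[e_\theta(Z)f(Z)]/\bar e_\theta$, and taking $f=e_\theta$ and $f=g_\theta$ yields the variance and covariance identities; your remarks on the conditional independence of the fresh continuation and the almost-sure reading of ``nonconstant'' are the points the paper makes more tersely.
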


\begin{proof}
Let $D$ denote disagreement on the continuation used for selection. For any
measurable set of contexts $A$,
\[
  \Pr(Z\in A,D)=\int_A e_\theta(z)\,\rho(dz),
  \qquad \Pr(D)=\bar e_\theta.
\]
Bayes' rule gives the stated conditional distribution. For any bounded
context-level function $f$,
\begin{align*}
  \mathbb{E}_{\rho_{\mathrm{inc}}}f(Z)
    &=\frac{\mathbb{E}_\rho[e_\theta(Z)f(Z)]}{\bar e_\theta}\\
    &=\mathbb{E}_\rho f(Z)+
      \frac{\operatorname{Cov}_\rho(e_\theta(Z),f(Z))}{\bar e_\theta}.
\end{align*}
An independent continuation at a retained context still has disagreement
probability $e_\theta(z)$. Taking $f=e_\theta$ gives the variance identity, and
the variance is positive unless $e_\theta(Z)$ is almost surely constant.
Taking $f=g_\theta$ proves the identity for teacher advantage. Since
$\bar e_\theta>0$, the change in that advantage has the sign of the covariance.
\end{proof}

\begin{corollary}[Quota relative to the current disagreement rate]
\label{cor:quota}
Suppose $0<\bar e_\theta<1$ and let
$\rho_{\mathrm{con}}(dz)=\frac{1-e_\theta(z)}{1-\bar e_\theta}\,\rho(dz)$ denote
the complementary consistent pool. For
$\rho_\lambda=\lambda\rho_{\mathrm{inc}}+(1-\lambda)\rho_{\mathrm{con}}$ with
$\lambda\in[0,1]$,
\[
  \mathbb{E}_{\rho_\lambda}g_\theta(Z)-\mathbb{E}_\rho g_\theta(Z)
    =\frac{(\lambda-\bar e_\theta)\,
           \operatorname{Cov}_\rho(e_\theta(Z),g_\theta(Z))}
          {\bar e_\theta(1-\bar e_\theta)}.
\]
When the covariance is positive, the mixture has a greater mean teacher advantage
than $\rho$ exactly when $\lambda>\bar e_\theta$, and $\lambda=\bar e_\theta$
reproduces $\rho$.
\end{corollary}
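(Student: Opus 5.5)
The plan is to reduce the corollary to the reweighting identity already established in the proof of Proposition~\ref{prop:das-enrichment}, applied twice: once for the inconsistent pool and once for its complement.

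\textbf{Consistent pool.} For any bounded context-level $f$, the proof of Proposition~\ref{prop:das-enrichment} gives
\[
  \mathbb{E}_{\rho_{\mathrm{inc}}}f=\mathbb{E}_\rho f+\operatorname{Cov}_\rho(e_\theta,f)/\bar e_\theta .
\]
The analogous computation for $\rho_{\mathrm{con}}$ uses the weight $(1-e_\theta)/(1-\bar e_\theta)$, which is well defined since $\bar e_\theta<1$. We have $\mathbb{E}_\rho[(1-e_\theta)f]=\mathbb{E}_\rho f-\mathbb{E}_\rho[e_\theta f]$ and $\mathbb{E}_\rho[e_\theta f]=\bar e_\theta\mathbb{E}_\rho f+\operatorname{Cov}_\rho(e_\theta,f)$. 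Together these yield
\[
  \mathbb{E}_{\rho_{\mathrm{con}}}f=\mathbb{E}_\rho f-\frac{\operatorname{Cov}_\rho(e_\theta,f)}{1-\bar e_\theta}.
\]
I would also note that $\rho_{\mathrm{con}}$ is a probability measure, because $0\le e_\theta\le1$ and its weight integrates to one.

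\textbf{Mixture.} Take $f=g_\theta$, which is bounded by $1$ in absolute value as a difference of probabilities. Expectation is linear in the mixing measure, so
\[
  \mathbb{E}_{\rho_\lambda}g_\theta-\mathbb{E}_\rho g_\theta
  =\operatorname{Cov}_\rho(e_\theta,g_\theta)\left[\frac{\lambda}{\bar e_\theta}-\frac{1-\lambda}{1-\bar e_\theta}\right].
\]
Putting the bracket over the common denominator gives numerator $\lambda(1-\bar e_\theta)-(1-\lambda)\bar e_\theta=\lambda-\bar e_\theta$. This is the stated identity.

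\textbf{Sign statements.} The denominator $\bar e_\theta(1-\bar e_\theta)$ is positive. So when the covariance is positive, the difference has the sign of $\lambda-\bar e_\theta$. At $\lambda=\bar e_\theta$ the mixture density is $e_\theta+(1-e_\theta)=1$ times $\rho$, so $\rho_\lambda=\rho$ exactly as measures, not merely in the $g_\theta$-mean. I would record this as a one-line check.

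\textbf{Main obstacle.} There is no real difficulty. The only care needed is to justify that $\rho_{\mathrm{con}}$ and $\rho_\lambda$ are genuine probability measures and that $g_\theta$ is bounded, so that the identity from Proposition~\ref{prop:das-enrichment} applies.
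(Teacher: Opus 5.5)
Your proposal is correct and matches the paper's proof step for step. Both derive $\mathbb{E}_{\rho_{\mathrm{con}}}g_\theta=\mathbb{E}_\rho g_\theta-\operatorname{Cov}_\rho(e_\theta,g_\theta)/(1-\bar e_\theta)$ from the complementary weight, combine it linearly with the $\rho_{\mathrm{inc}}$ identity of Proposition~\ref{prop:das-enrichment}, and collect terms; your check that $\rho_{\bar e_\theta}=\rho$ as measures is the paper's closing remark $\rho=\bar e_\theta\rho_{\mathrm{inc}}+(1-\bar e_\theta)\rho_{\mathrm{con}}$.
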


\begin{proof}
Write $C=\operatorname{Cov}_\rho(e_\theta(Z),g_\theta(Z))$. Since
$\mathbb{E}_\rho[(1-e_\theta)g_\theta]
 =(1-\bar e_\theta)\mathbb{E}_\rho g_\theta-C$, dividing by $1-\bar e_\theta$
gives $\mathbb{E}_{\rho_{\mathrm{con}}}g_\theta
 =\mathbb{E}_\rho g_\theta-C/(1-\bar e_\theta)$, while
Proposition~\ref{prop:das-enrichment} gives
$\mathbb{E}_{\rho_{\mathrm{inc}}}g_\theta
 =\mathbb{E}_\rho g_\theta+C/\bar e_\theta$. Taking the $\lambda$-weighted
combination and collecting terms yields the identity. At
$\lambda=\bar e_\theta$ it vanishes, consistent with
$\rho=\bar e_\theta\rho_{\mathrm{inc}}+(1-\bar e_\theta)\rho_{\mathrm{con}}$.
\end{proof}

The covariance condition connects the selection signal to the distillation
signal. DAS favors useful teaching contexts when the hinted model's advantage
is greater where the current student is more likely to disagree with
supervision. If both models struggle on the same contexts, disagreement alone
need not indicate a stronger teaching signal. The covariance describes the
quality of the selected targets. Translating that advantage into a student
improvement also depends on transfer fidelity, as expressed in
Corollary~\ref{cor:decision-improvement}.

The proposition characterizes the inconsistent candidate pool before DAS
balances the two disagreement directions and adds consistent points. The same
identities hold within each supervisory class by conditioning $\rho$ on $B=b$,
provided that class has positive disagreement probability. Direction balancing
and the quota then determine how these pools enter the selected set, and
response lengths determine their weights in Equation~\ref{eq:loss}.
Corollary~\ref{cor:quota} characterizes the inconsistent share of an idealized
mixture relative to the current disagreement rate $\bar e_\theta$. In the
implemented procedure, the quota $q=\max\{1,\lfloor\alpha K\rfloor\}$ caps the
number of inconsistent points before the remaining groups are filled by
priority. The current run artifacts do not retain the pool counts needed to show
that the realized inconsistent share exceeds $\bar e_\theta$ at every update.
Moreover, the corollary samples the remaining points in proportion to
$1-e_\theta$, whereas DAS uses a deterministic group priority, so
$\rho_\lambda$ is an analytical approximation rather than the exact selected-set
distribution. As the policy changes, $e_\theta$ and its induced context weighting
change with it. The fixed-selection control in Section~\ref{sec:ablation}
removes this policy-dependent allocation while preserving each point's visible
history.

\end{document}